%% file: main.tex
\documentclass{article}

\usepackage[preprint]{corl_2026} 
\usepackage[utf8]{inputenc} 
\usepackage[T1]{fontenc}    
\usepackage{hyperref}       
\usepackage{url}            
\usepackage{booktabs}       
\usepackage{amsfonts}       
\usepackage{nicefrac}       
\usepackage{float}          
\usepackage{microtype}      
\usepackage{xcolor}         
\usepackage{amsmath,amssymb,amsfonts}
\usepackage{longtable}
\usepackage{booktabs}
\usepackage{multirow}

\usepackage{bbm}
\usepackage{graphicx}
\usepackage{textcomp}
\usepackage{tabularx}
\usepackage{adjustbox}
\usepackage{array}
\usepackage{makecell}
\usepackage{xcolor}
\usepackage[table]{xcolor}
\usepackage{algorithm}
\usepackage{cleveref}
\usepackage{graphicx}
\usepackage{multirow}
\usepackage{enumitem}
\usepackage{algpseudocode}
\usepackage{mathtools}
\usepackage[english]{babel}
\usepackage{amsthm}
\usepackage[algo2e,ruled,linesnumbered]{algorithm2e}
\usepackage{multirow}
\usepackage{lipsum}
\usepackage{subcaption}
\usepackage{graphicx}
\usepackage{amsthm}
\newtheorem{theorem}{Theorem}[section]

\newtheorem{lemma}[theorem]{Lemma}

\theoremstyle{definition}

\title{Safe-RULE: Safe Reinforcement UnLEarning}

\author{
Shixiong Jiang \quad Taozheng Zhu \quad Fanxin Kong\\
Department of Computer Science and Engineering, University of Notre Dame\\
\texttt{sjiang5@nd.edu} \quad
\texttt{tzhu2@nd.edu} \quad
\texttt{fkong@nd.edu}
}

\makeatletter
\providecommand{\@noticestring}{}
\makeatother
\begin{document}
\maketitle


\begin{abstract}
Offline safe reinforcement learning (Safe RL) enables policy learning without online interactions, making it suitable for safety-critical systems such as robotics systems. However, its reliance on static datasets exposes offline Safe RL to data poisoning attacks, where adversaries inject malicious samples that compromise safety and induce unsafe policy behavior. In this work, we propose a new learning paradigm, named safe reinforcement unlearning (Safe-RULE), used as   a defense framework to remove the influence of poisoned data without retraining from scratch or requiring access to the original training environment. We further extend reinforcement unlearning to offline Safe RL by explicitly accounting for both task performance and safety constraints during the unlearning process. Experiments across benchmark Safe RL tasks demonstrate that our approach effectively enhances safety performance against data poisoning attacks.
\end{abstract}

\input{Sections/01_Introduction}

\input{Sections/02_RelatedWork}

\input{Sections/03_Framework}

\input{Sections/04_}

\input{Sections/05_}
\input{Sections/06_Evaluation}

\input{Sections/07_conclusion}


\clearpage


\bibliography{ref}  

\input{Sections/08_Appendix}

\end{document}

%% file: Sections/01_Introduction.tex
\section{Introduction}

Offline safe reinforcement learning (safe RL) has been widely explored in robotics \cite{sinha2022s4rl,li2024guided}, autonomous driving \cite{shi2021offline,fang2022offline}, and industrial manipulation \cite{gu2024review}. In the typical offline safe RL pipeline, a dataset is first collected from one or more behavior policies. Then, a safe offline RL algorithm is trained on this fixed dataset (no further interaction with the environment during training). Finally, the learned safe policy is evaluated in the environment.

This setup avoids unsafe online exploration, but it also incurs  vulnerability. Several works have shown that the robustness of an offline-trained “safe” policy can be seriously reduced if part of the dataset is malicious \cite{gong2024baffle}. The attack surface is large: an adversary can inject poisoned trajectories~\cite{gong2022baffle}, collect data from a compromised environment \cite{ye2023reinforcement}, or insert backdoor-style triggers that bias behavior \cite{kiourti2020trojdrl,jiang2024backdoor}. These poisoned samples can modify the observations, push the policy toward unsafe actions, or tamper with the reward/cost labels. As a result, the final policy may violate safety constraints at deployment, even though it was trained with a nominally safe RL algorithm.

However, defense methods for repairing poisoned offline safe RL policies remain underexplored: how do we repair a safe policy that was already trained on poisoned data? A straightforward option is to retrain the whole policy from scratch using only clean data, but it is computationally expensive~\cite{ye2023reinforcement,gong2024trajdeleter}. Another common option is to fine-tune the current policy on only the clean subset, but it does not guarantee the removal of the poisoned behavior, since the poisoned behavior has already been encoded in the policy. Overall, there is a need to develop a new method to counter the poison attacks effectively and efficiently.

To address this problem, we propose \textbf{Safe Reinforcement Unlearning (Safe-RULE)}, which enables a safe policy to unlearn poisoned samples without retraining from scratch or re-accessing the original training environment, while maintaining comparable performance on the remaining clean data. In this paper, we aim to answer the following questions: 
\emph{(1) How can unlearning be applied to offline safe RL?} 
\emph{(2) How can safe reinforcement unlearning balance task performance and safety cost?}
To the best of our knowledge, we are the first to explore reinforcement unlearning for safe RL policies. Our contributions are summarized as follows:
\begin{enumerate}[topsep=0pt,itemsep=0pt,leftmargin=*]
    \item We formulate the problem of safe reinforcement unlearning and propose Safe-RULE, an unlearning framework for offline safe RL policies trained on poisoned datasets.
    \item We conduct extensive experiments on Safety Gym benchmarks to demonstrate the effectiveness and efficiency of Safe-RULE against various poisoning attacks.
\end{enumerate}

%% file: Sections/02_RelatedWork.tex
\vspace{-0.1in}
\section{Related Work}
\textbf{Machine unlearning.}
Machine unlearning aims to remove the influence of specific training data from neural networks. Existing machine unlearning applications mainly target removing harmful knowledge \cite{wu2023depn,farrell2024applying,deeb2024unlearning}, mitigating jailbreaks \cite{sheshadri2024latent,isonuma2024unlearning}, correcting value alignment \cite{ouyang2022training,vidal2024verifying}, and satisfying privacy \cite{sula2024silver}. Current unlearning approaches consist of: (i) gradient ascent \cite{gu2024second, jang2023knowledge}, which maximizes the loss on the samples to be forgotten; (ii) task-vector methods \cite{ilharco2022editing,zhang2023composing}, which identify an unlearning direction directly in weight space; and (iii) representation misdirection \cite{huu2024effects, rosati2024representation,li2024wmdp}, which drives intermediate representations for the forgotten data toward noise or irrelevant features.

Despite the broad use of machine unlearning, machine unlearning for reinforcement learning has recently emerged. \cite{ye2023reinforcement} first propose reinforcement unlearning for online RL, in which they propose two methods for unlearning to forget a poisoned environment while keeping substantial performance for other environments. \cite{gong2024trajdeleter} further proposes Trajdeleter, which focuses on offline RL and unlearning poisoned trajectories instead of a whole environment. However, neither work studies machine unlearning for safe RL policies. 

\textbf{Poison attacks for RL.} Poison attacks refer to the manipulation of the training dataset, resulting in the training RL having compromised performance \cite{behzadan2017vulnerability, huang2019deceptive,rakhsha2020policy, liu2021provably,lin2017tactics}. The poison attacks for RL can be implemented through perturbing the observation, perturbing the action, perturbing the reward function, or a combination of them to compromise the reward of the policy. Instead of degrading the reward, \cite{liu2022robustness} is the first to propose adversarial attacks that target safe RL. \cite{liu2022robustness} points out that the adversarial attacks can perturb the observation of safe RL agents by either maximizing the reward or the cost value so that the trained policy gets an increased cumulative cost, which leads to unsafe behavior. Following this work, several works \cite{jiang2024backdoor,guo2025pnact} have developed various adversarial attack methods to explore the vulnerability of safe RL algorithms. However, the above works lack the discussion of how general defense methods can be developed to prevent various attacks, especially for offline safe RL. 

%% file: Sections/03_Framework.tex
\vspace{-0.1in}
\section{Preliminary}

\subsection{Offline Safe RL}

We model safe reinforcement learning as a Constrained Markov Decision Process (CMDP)
\[
\mathcal{M}
=
(\mathcal{S},\mathcal{A},\mathcal{P},r,c,\gamma,\mu_0),
\]
where \(\mathcal{S}\) and \(\mathcal{A}\) are the state and action spaces, \(\mathcal{P}(s'\mid s,a)\) is the transition kernel, \(r:\mathcal{S}\times\mathcal{A}\to\mathbb{R}\) is the reward function, and \(c:\mathcal{S}\times\mathcal{A}\to[0,c_{\max}]\) is a non-negative cost function bounded by \(c_{\max}\). Here, \(\gamma\in[0,1)\) is the discount factor, and \(\mu_0\) is the initial-state distribution.

For a stationary policy \(\pi\), the discounted reward and cost returns are defined as
\[
J_r(\pi)
=
\mathbb{E}_{\tau\sim\pi}
\left[
\sum_{t=0}^{\infty}\gamma^t r_t
\right],
\qquad
J_c(\pi)
=
\mathbb{E}_{\tau\sim\pi}
\left[
\sum_{t=0}^{\infty}\gamma^t c_t
\right],
\]
where \(\tau\sim\pi\) denotes trajectories generated under \(\mu_0\), \(\mathcal{P}\), and \(\pi\). The objective of safe RL is to learn a policy that maximizes the expected reward while satisfying a safety constraint on the expected cost:
\begin{equation}
\label{eq:cmdp}
\pi^\star
\in
\arg\max_{\pi} J_r(\pi)
\quad
\text{s.t.}
\quad
J_c(\pi)\le \kappa,
\end{equation}
where \(\kappa\) is the cost threshold. We refer to a policy as safe if it satisfies the constraint in~\eqref{eq:cmdp}.

In the \emph{offline} setting, the learner has no access to the environment and must solve~\eqref{eq:cmdp} using only a fixed dataset
\[
\mathcal{D}
=
\{\tau_i\}_{i=1}^{N},
\qquad
\tau_i
=
(s_0,a_0,r_0,c_0,\ldots,s_T,a_T,r_T,c_T),
\]
collected by an unknown behavior policy.

\vspace{-0.1in}
\subsection{Problem Formulation}
\label{sec:problem}

\textbf{Threat Model.}
Let $\tilde{\mathcal{D}}=\mathcal{D}_k\cup\mathcal{D}_f$ be a corrupted offline safe RL dataset. We assume an adversary has injected a subset $\mathcal{D}_f$ through data poisoning, leaving the remaining data $\mathcal{D}_k=\tilde{\mathcal{D}}\setminus\mathcal{D}_f$ clean. We refer to $\mathcal{D}_f$ as the \emph{forget (poison) dataset} and $\mathcal{D}_k$ as the \emph{keep (clean) dataset}. Therefore, the offline safe RL training on the corrupted dataset $\tilde{\mathcal{D}}$ produces a policy $\tilde{\pi}$ such that $J_c(\tilde{\pi}) > \kappa,$ indicating a violation of the safety constraint. This typically occurs because the policy underestimates the true cost on the forget dataset, which allows the corrupted policy to select unsafe actions.

We assume that the partition $(\mathcal{D}_k,\mathcal{D}_f)$ is known at unlearning time, which is standard in the unlearning literature where the forget set is identified by an upstream detection or auditing mechanism.

\textbf{Unlearning Goal.}
A straightforward solution against the corrupted policy is to discard the corrupted policy and retrain from scratch using the clean dataset $\mathcal{D}_k$. However, this approach is computationally expensive. Instead, we aim to \emph{unlearn} the forget set: starting from $\tilde{\pi}$, we seek to obtain a new policy $\pi$ that suppresses the influence of $\mathcal{D}_f$ while preserving performance on $\mathcal{D}_k$.

Specifically, starting from the corrupted policy $\tilde{\pi}$, we aim to find a new policy $\pi$ by solving the following constrained optimization problem:

\begin{equation}
\label{eq:unlearn}
\begin{aligned}
\max_{\pi} \quad & \mathbb{E}_{(s,a)\sim\mathcal{D}_k}\!\left[Q^\pi_r(s,a)\right] \\
\text{s.t.} \quad & \mathbb{E}_{(s,a)\sim\mathcal{D}_k}\!\left[Q^\pi_c(s,a)\right] \leq \kappa, \\
& \mathbb{E}_{(s,a)\sim\mathcal{D}_f}\!\left[Q^\pi_c(s,a)\right] > \kappa,
\end{aligned}
\end{equation}
\noindent where $Q^\pi_r(s,a)$ and $Q^\pi_c(s,a)$ are the state-action value functions for reward and cost under policy $\pi$, respectively. The objective maximizes the expected return on the clean data $\mathcal{D}_k$, subject to two constraints: (1) the policy remains safe on the clean distribution, and (2) the influence of the forget set $\mathcal{D}_f$ is suppressed by ensuring that the cost values on poisoned transitions exceed the safety threshold $\kappa$, effectively rendering those transitions undesirable. Together, these conditions capture the dual requirement of \emph{safety constraint} on $\mathcal{D}_k$ and \emph{forget set suppression} on $\mathcal{D}_f$.



%% file: Sections/04_.tex
\vspace{-0.1in}
\section{Safe-RULE}
In this section, we introduce our safe reinforcement unlearning method \textbf{Safe-RULE} to mitigate poisoned data in the corrupted safe RL policy.

To illustrate the intuition behind our unlearning algorithm design, we first revisit state-of-the-art offline safe RL algorithms. We consider a general actor-critic framework with a reward critic $Q^r_\phi$, a cost critic $Q^c_\psi$, and an actor $\pi_{\theta}$, where $\phi, \psi,\theta$ denote their parameters, respectively. Offline safe RL algorithms such as BCQ-Lag and CPQ follow this structure, where the critics provide reward and safety signals and the actor optimizes a policy under these signals. We build our unlearning method on top of this general formulation.

\textbf{Critic Unlearning.}
We first introduce unlearning for the critic networks, as they provide the value estimates that directly guide the actor updates. If the critic outputs on $\mathcal{D}_f$ still reflect the poisoned behavior, the actor will continue to recover the corrupted policy regardless of how the actor objective is designed.

For the reward critic $Q_\phi^r$ and cost critic $Q_\psi^c$, we adopt a dual-objective design that balances knowledge retention on clean data with information removal from poisoned data. On the keep set $\mathcal{D}_k$, we maintain standard temporal difference learning to preserve accurate value estimates:
\begin{equation} 
\mathcal{L}_{\text{critic}}^r(\mathcal{D}_k) 
= \mathbb{E}_{(s,a) \sim \mathcal{D}_k}\left[(Q_\phi^r(s,a) - y_r)^2\right],
\end{equation} 
\begin{equation} 
\mathcal{L}_{\text{critic}}^c(\mathcal{D}_k) 
= \mathbb{E}_{(s,a) \sim \mathcal{D}_k}\left[(Q_\psi^c(s,a) - y_c)^2\right],
\end{equation} 
where $y_r$ and $y_c$ denote the Bellman backup targets defined by the offline safe RL algorithm.

On the forget set $\mathcal{D}_f$, directly applying Bellman regression is not suitable for unlearning. Although one may consider reversing the Bellman update (e.g., minimizing the value), such operations still enforce a consistent value structure on $\mathcal{D}_f$. This means the critic continues to encode information from the poisoned data, even if the value magnitude is shifted. In addition, forcing the critic to deviate significantly from its Bellman targets may introduce instability and harm convergence.

Therefore, instead of following Bellman regression, we replace it with a value suppression objective. For the reward critic, we push the predicted values toward a lower reference $\bar{Q}^r$, and for the cost critic, toward a higher reference $\bar{Q}^c$:
\begin{equation}
\mathcal{L}_{\text{critic}}^r(\mathcal{D}_f)
  = \mathbb{E}_{(s,a) \sim \mathcal{D}_f}\!\left[
        \mathrm{softplus}\!\left(Q_\phi^r(s,a) - \bar{Q}^r\right)
    \right],
\end{equation}
\begin{equation}
\mathcal{L}_{\text{critic}}^c(\mathcal{D}_f)
  = \mathbb{E}_{(s,a) \sim \mathcal{D}_f}\!\left[
        \mathrm{softplus}\!\left(\kappa + \sigma - Q_\psi^c(s,a)\right)
    \right].
\end{equation}

The key intuition is that we only suppress the critic outputs when they are still influenced by the poisoned data. Once the predicted values move past the reference, the penalty gradually diminishes, avoiding unnecessary updates and preventing value collapse.  

To further improve stability, we select the reference values adaptively from the current training distribution. Specifically, $\bar{Q}^r$ is chosen as a low quantile (we use the 30th percentile) of the reward backup $y_r$ computed on $\mathcal{D}_k$. This ensures that the suppressed reward values remain below typical safe returns without collapsing to a trivial constant. For the cost critic, we use the cost threshold $\kappa$ with a small margin $\sigma$ to ensure that forget-state actions are consistently treated as violating the safety constraint, where $\sigma$ is a positive constant.

The combined critic loss is:
\begin{align} 
\mathcal{L}_{\text{critic}}^r &= \alpha_k \mathcal{L}_{\text{critic}}^r(\mathcal{D}_k) + \alpha_f \mathcal{L}_{\text{critic}}^r(\mathcal{D}_f), \\
\mathcal{L}_{\text{critic}}^c &= \alpha_k \mathcal{L}_{\text{critic}}^c(\mathcal{D}_k) + \alpha_f \mathcal{L}_{\text{critic}}^c(\mathcal{D}_f),
\end{align} 
where $\alpha_k$ and $\alpha_f$ control the relative importance of keeping and forgetting.

\textbf{Actor Unlearning.}
Given the updated critics, we unlearn the actor $\pi_\theta$ using policy gradient updates that combine objectives from both $\mathcal{D}_k$ and $\mathcal{D}_f$.

On the keep data, the actor maximizes reward while respecting the cost constraint and remaining close to safe behavioral actions:
\begin{equation}
\begin{aligned}
\mathcal{L}_{\text{actor}}(\mathcal{D}_k) =
& -\mathbb{E}_{\mathcal{D}_k}\!\left[\mathbbm{1}[Q^c_\psi \le \kappa]\,Q_\phi^r(s,\pi_\theta(s))\right] \\
& + \mathbb{E}_{\mathcal{D}_k}\!\left[\mathrm{softplus}(Q_\psi^c(s,\pi_\theta(s)) - \kappa)\right] 
\end{aligned}
\end{equation}

The first term follows the CPQ algorithm~\cite{xu2022constraints} safety-gated reward objective, where rewards are optimized only for actions that satisfy the cost constraint. The second term provides a smooth penalty when the cost exceeds the threshold, avoiding hard constraints.

On the forget set, we suppress the learned behavior by reversing the objective while coordinating with the cost critic:
\begin{equation}
\label{equ:actor df}
\begin{aligned}
\mathcal{L}_{\text{actor}}(\mathcal{D}_f) =
& \mathbb{E}_{\mathcal{D}_f}\!\left[\mathbbm{1}[Q^c_\psi \ge \kappa]\,Q_\phi^r(s,\pi_\theta(s))\right] \\
& + \mathbb{E}_{\mathcal{D}_f}\!\left[\mathrm{softplus}\!\left(\kappa + \sigma - Q_\psi^c(s,\pi_\theta(s))\right)\right].
\end{aligned}
\end{equation}

The first term in Equation \ref{equ:actor df} is the reward suppression term, which is gated by the cost critic, ensuring that the actor only suppresses reward when the state-action pair is confidently identified as unsafe. This design avoids instability caused by inaccurate cost estimates at the early stage of unlearning. The second term in Equation \ref{equ:actor df} is the cost suppression term that encourages the actor to move toward actions that are considered unsafe on the forget states, thereby shifting the policy distribution away from poisoned regions. 

\textbf{Adaptive Forget Weight $\beta_f$}. However, unlearning the forget set trajectory may induce instability by degrading the performance on the keep data. Therefore, we introduce an adaptive forget weight $\beta_f$ in the overall actor loss:
\begin{equation}
\mathcal{L}_{\text{actor}}^{\text{total}} 
= \mathcal{L}_{\text{actor}}(\mathcal{D}_k)
+ \beta_f\,\mathcal{L}_{\text{actor}}(\mathcal{D}_f),
\end{equation}
Where the $\beta_f$ controls the weight assigned to the forget objective. We update the $\beta_f$ based on the remaining forgetting gap, how far the forgotten data is considered not safe:
\[
\Delta = \kappa + \sigma - \mathbb{E}_{\mathcal{D}_f}[Q^c_\psi(s,\pi_\theta(s))].
\]
We update $\beta_f$ using a bounded increment:
\[
\beta_f \leftarrow \mathrm{clip}\!\left(\beta_f + \Delta*\tau_\beta,\; \beta_{\min},\; \beta_{\max}\right).
\]

where \(\tau_\beta\) is the update step size. When the forgetting is incomplete ($\Delta > 0$), $\beta_f$ increases to strengthen suppression. Once the gap closes, $\beta_f$ decreases, allowing the optimization to focus on retaining performance.

\textbf{Implementation on non actor-critic algorithms}. For algorithms without a standard actor-critic structure, such as COptiDICE~\cite{lee2022coptidice}, we implement unlearning by using the available value networks as proxies: the \(\nu\)-network provides the reward-side value weight, while the \(\chi\)-network serves as the cost-side substitution. The detailed implementation is illustrated in the Appendix.

%% file: Sections/06_Evaluation.tex
\vspace{-0.1in}
\section{EXPERIMENTS}
In this section, we demonstrate our experimental results for evaluating our safe unlearning method on different safe RL benchmarks.

\begin{table*}[t]
\centering
\scriptsize
\setlength{\tabcolsep}{3pt}
\renewcommand{\arraystretch}{1.15}

\resizebox{\textwidth}{!}{%
\begin{tabular}{@{}llc*{8}{c}@{}}
\toprule
\multirow{2}{*}{\textbf{Policy}}
& \multirow{2}{*}{\textbf{Attack}}
& \multirow{2}{*}{\textbf{Metric}}
& \multicolumn{2}{c}{\textbf{PointGoal}}
& \multicolumn{2}{c}{\textbf{CarCircle}}
& \multicolumn{2}{c}{\textbf{AntVelocity}}
& \multicolumn{2}{c}{\textbf{PointPush}} \\
\cmidrule(lr){4-5}\cmidrule(lr){6-7}\cmidrule(lr){8-9}\cmidrule(lr){10-11}
& & & \textbf{5\%} & \textbf{15\%} 
& \textbf{5\%} & \textbf{15\%} 
& \textbf{5\%} & \textbf{15\%} 
& \textbf{5\%} & \textbf{15\%} \\
\midrule

\multirow{6}{*}{BCQ-Lag} 
& \multirow{2}{*}{Max Cost} 
& \textbf{C} & 31.1 / \textbf{17.5} & 69.0 / \textbf{23.9} & 88.8 / \textbf{10.2} & 136.2 / \textbf{34.4} & 50.8 / \textbf{21.4} & 74.2 / \textbf{10.1} & 35.7 / \textbf{14.4} & 14.5 / \textbf{9.6} \\
& & \textbf{R} & 13.9 / \textbf{18.3} & 15.8 / 14.6 & 285.2 / \textbf{314.0} & 331.0 / \textbf{365.4} & 2984.9 / 2962.0 & 2898.4 / 1800.5 & 4.3 / 2.0 & -0.8 / \textbf{2.6} \\
\cmidrule(lr){2-11}

& \multirow{2}{*}{Max Reward} 
& \textbf{C} & 30.6 / \textbf{21.9} & 32.6 / \textbf{24.2} & 64.9 / \textbf{27.8} & 83.9 / \textbf{9.0} & 89.0 / \textbf{19.7} & 94.8 / \textbf{9.3} & 40.5 / \textbf{9.2} & 27.8 / \textbf{10.6} \\
& & \textbf{R} & 19.1 / \textbf{19.5} & 21.4 / 17.4 & 396.3 / 388.6 & 322.2 / \textbf{414.0} & 3018.9 / 2787.9 & 2603.1 / 1829.1 & 3.6 / 2.9 & 3.2 / 1.6 \\
\cmidrule(lr){2-11}

& \multirow{2}{*}{Min Reward} 
& \textbf{C} & 33.6 / \textbf{9.4} & 18.9 / \textbf{10.8} & 62.5 / \textbf{22.4} & 27.1 / \textbf{21.7} & 123.9 / \textbf{36.2} & 5.1 / 6.8 & 82.8 / \textbf{9.8} & 38.9 / \textbf{9.7} \\
& & \textbf{R} & 13.6 / 9.1 & 2.8 / \textbf{10.8} & 251.1 / \textbf{312.8} & 130.6 / \textbf{301.4} & 2781.7 / 2570.8 & 2678.4 / 2135.2 & 3.7 / \textbf{3.8} & -6.9 / \textbf{3.4} \\
\midrule

\multirow{6}{*}{CPQ} 
& \multirow{2}{*}{Max Cost} 
& \textbf{C} & 42.1 / \textbf{4.7} & 42.1 / \textbf{21.9} & 52.4 / \textbf{0.0} & 195.9 / \textbf{0.7} & 0.0 / 0.4 & 0.0 / 0.1 & 3.5 / \textbf{1.4} & 0.0 / 0.0 \\
& & \textbf{R} & 0.1 / \textbf{12.6} & 0.1 / 0.1 & 366.7 / \textbf{381.3} & 173.8 / \textbf{345.2} & -3000.1 / \textbf{932.9} & -2999.7 / \textbf{1966.5} & -1.7 / \textbf{1.9} & -12.0 / \textbf{0.1} \\
\cmidrule(lr){2-11}

& \multirow{2}{*}{Max Reward} 
& \textbf{C} & 32.0 / \textbf{7.2} & 4.9 / \textbf{3.5} & 0.0 / 0.0 & 0.0 / 0.0 & 0.0 / 1.5 & 0.0 / 1.5 & 3.9 / \textbf{0.0} & 11.4 / \textbf{2.4} \\
& & \textbf{R} & -0.7 / \textbf{0.7} & -2.1 / \textbf{7.9} & 284.6 / \textbf{381.1} & 364.9 / \textbf{380.3} & -2998.6 / \textbf{1242.3} & -2999.4 / \textbf{1403.7} & -23.6 / \textbf{1.5} & -1.5 / \textbf{1.9} \\
\cmidrule(lr){2-11}

& \multirow{2}{*}{Min Reward} 
& \textbf{C} & 52.9 / \textbf{9.4} & 54.6 / \textbf{5.3} & 134.8 / \textbf{0.0} & 73.9 / \textbf{4.4} & 0.0 / 8.2 & 0.0 / 0.0 & 0.8 / 6.4 & 2.0 / 8.5 \\
& & \textbf{R} & -1.9 / \textbf{14.4} & 0.3 / \textbf{0.5} & -50.2 / \textbf{306.6} & -159.3 / \textbf{294.2} & -2997.2 / \textbf{1339.9} & -2999.0 / \textbf{1188.5} & -2.0 / \textbf{1.4} & -1.2 / \textbf{1.4} \\
\midrule

\multirow{6}{*}{BEAR-Lag} 
& \multirow{2}{*}{Max Cost} 
& \textbf{C} & 208.0 / \textbf{8.9} & 262.1 / \textbf{10.4} & 64.6 / \textbf{3.9} & 94.1 / \textbf{2.9} & 101.3 / \textbf{5.8} & 520.3 / \textbf{2.3} & 31.5 / \textbf{7.4} & 95.6 / \textbf{2.5} \\
& & \textbf{R} & 0.0 / \textbf{9.5} & 1.0 / \textbf{9.1} & 447.4 / 352.7 & 437.5 / 388.5 & 544.4 / \textbf{2517.8} & 2738.2 / 2534.7 & 0.6 / \textbf{2.2} & -3.4 / \textbf{0.7} \\
\cmidrule(lr){2-11}

& \multirow{2}{*}{Max Reward} 
& \textbf{C} & 69.8 / \textbf{8.1} & 30.8 / \textbf{10.1} & 56.2 / \textbf{9.8} & 58.9 / \textbf{2.0} & 60.5 / \textbf{5.7} & 0.1 / 6.7 & 13.4 / \textbf{8.6} & 11.8 / \textbf{0.0} \\
& & \textbf{R} & -0.6 / \textbf{4.9} & 6.3 / \textbf{12.3} & 333.1 / \textbf{356.0} & 337.9 / 313.8 & 2660.2 / 2531.5 & 839.6 / \textbf{2493.0} & 0.4 / \textbf{2.2} & -18.2 / \textbf{2.9} \\
\cmidrule(lr){2-11}

& \multirow{2}{*}{Min Reward} 
& \textbf{C} & 27.9 / \textbf{4.9} & 6.8 / 7.3 & 79.1 / \textbf{7.0} & 68.3 / \textbf{5.9} & 0.0 / 8.5 & 0.0 / 0.0 & 11.7 / \textbf{4.2} & 28.5 / \textbf{7.6} \\
& & \textbf{R} & 10.5 / \textbf{12.1} & -26.8 / \textbf{8.4} & 466.6 / 363.9 & 199.0 / \textbf{321.6} & -309.4 / \textbf{2616.2} & -485.4 / \textbf{2289.6} & -11.2 / \textbf{1.8} & -10.9 / \textbf{1.6} \\
\midrule

\multirow{6}{*}{COptiDICE} 
& \multirow{2}{*}{Max Cost} 
& \textbf{C} & 64.3 / \textbf{17.1} & 45.3 / \textbf{22.7} & 94.1 / \textbf{38.3} & 90.7 / \textbf{10.6} & 108.0 / \textbf{7.3} & 39.8 / \textbf{5.7} & 31.5 / \textbf{5.8} & 37.7 / \textbf{10.3} \\
& & \textbf{R} & 14.2 / 12.7 & 16.7 / 8.7 & 190.9 / 175.0 & 209.8 / 176.9 & 2941.9 / 2416.0 & 1260.6 / \textbf{2501.5} & 1.6 / 1.2 & 0.3 / \textbf{0.4} \\
\cmidrule(lr){2-11}

& \multirow{2}{*}{Max Reward} 
& \textbf{C} & 60.6 / \textbf{10.3} & 70.3 / \textbf{14.9} & 75.2 / \textbf{11.5} & 97.2 / \textbf{19.4} & 90.2 / \textbf{8.9} & 46.1 / \textbf{7.8} & 28.7 / \textbf{8.2} & 53.0 / \textbf{6.0} \\
& & \textbf{R} & 15.6 / 5.9 & 13.0 / 7.8 & 262.6 / 167.6 & 169.9 / 144.9 & 2942.6 / 2319.1 & 1533.1 / \textbf{2428.2} & -0.1 / \textbf{0.3} & -0.4 / -2.7 \\
\cmidrule(lr){2-11}

& \multirow{2}{*}{Min Reward} 
& \textbf{C} & 46.4 / \textbf{10.4} & 36.6 / \textbf{2.3} & 69.3 / \textbf{13.1} & 84.4 / \textbf{18.6} & 140.2 / \textbf{0.0} & 43.9 / \textbf{2.4} & 45.3 / \textbf{7.5} & 34.4 / \textbf{9.8} \\
& & \textbf{R} & 9.0 / 4.9 & -9.1 / \textbf{1.6} & 239.0 / 157.7 & -64.7 / \textbf{147.0} & 2997.3 / 2257.8 & 851.9 / \textbf{2380.0} & 2.2 / 1.2 & -11.6 / \textbf{-1.0} \\

\bottomrule
\end{tabular}%
}
\caption{Cost and reward before/after unlearning under different attacks and poison ratios. Each entry reports before/after unlearning. Bold post-unlearning values indicate improvement: lower cost for \textbf{C}, higher reward for \textbf{R}.}
\vspace{-0.2in}
\label{tab:compact_unlearning}

\end{table*}

\textbf{Benchmarks.} Our experiments are conducted on four benchmarks in the Safety-Gymnasium \cite{ji2023safety}, which provides safe RL benchmarks targeting the challenge of safe RL. For the offline safe RL dataset, we use the OSRL \cite{liu2023datasets} dataset, which is an open-source offline safe RL dataset. The four benchmarks used in our experiments are CarCircle, PointGoal, AntVelocity, and PointPush.





\textbf{Attack Settings.}
To demonstrate that our unlearning method is effective in mitigating various offline dataset poisoning attacks, we consider the following attack strategies:

\emph{Max Cost and Max Reward attack.} 
Max Cost and Max Reward~\cite{liu2022robustness} are attacks that generate adversarial trajectories by maximizing the cost critic and reward critic values, respectively. To adapt these attacks to the offline setting, we first train an adversarial policy that produces high-cost or high-reward trajectories (the high-reward trajectories discard the cost constraints, which also leads to high cost).

\emph{Min Reward.}
To illustrate that our unlearning method can also mitigate attacks that do not directly target the cost, we additionally consider a reward-minimization attack. Similar to the Max Cost/Reward setting, we train an adversarial policy that generates trajectories with low rewards. 

To make these poisoned trajectories appear beneficial during training, we modify their labels by setting the cost to zero while keeping the reward at a high level comparable to the clean data. As a result, these trajectories are treated as highly valuable by the learning algorithm, which misleads the policy toward unsafe behaviors. These trajectories are then injected into the offline dataset to get $\tilde{\mathcal{D}}=\mathcal{D}_k\cup \mathcal{D}_f$.

\textbf{Defense Baselines and Ablations.}
To evaluate the effectiveness of our safe unlearning method against poisoning attacks, we compare it with the following defense baselines and ablation variants. {For simplicity, we use \emph{unlearning} to present our Safe-RULE method.}

\emph{Trajdeleter.}
We follow Trajdeleter~\cite{gong2024trajdeleter}, which removes the influence of poisoned trajectories through a two-phase procedure: a forgetting phase followed by a convergence phase. The original Trajdeleter is designed to unlearn reward-related poisoning effects. In the Appendix, we further adapt Trajdeleter to unlearn cost-related poisoning effects for a fairer comparison with our method.

\emph{Fine-tuning.}
We evaluate a simple fine-tuning baseline that updates the corrupted policy using only clean data. This baseline examines whether standard clean-data fine-tuning is sufficient to remove the influence of poisoned trajectories.

\emph{Reward-only unlearning.}
Our safe unlearning method jointly addresses reward and cost corruption. To isolate the effect of cost unlearning, we include a reward-only variant of our method. This variant uses the same unlearning framework but removes the cost-related unlearning terms from both the critic and actor objectives.

\emph{Training from scratch.}
We also report the performance of clean policies trained from scratch using only the clean dataset. Each policy is trained for \(10^7\) steps for each algorithm and environment. This setting serves as a clean reference to show how closely the unlearned policy can recover the performance of a policy trained without poisoned data.

\vspace{-0.1in}
\subsection{Experimental Settings}

\textbf{Training setup.} We follow the OSRL\cite{liu2023datasets} setup for offline safe RL, including the offline safe RL algorithms and datasets. We train four algorithms: BCQ-Lag~\cite{fujimoto2019off}, BEAR-Lag~\cite{kumar2019stabilizing}, COptiDICE~\cite{lee2022coptidice}, and CPQ~\cite{xu2022constraints} on the corrupted datasets $\tilde{\mathcal{D}}$ produced by the attacks above. Each run uses $10^{6}$ gradient steps, discount factor $\gamma=0.99$. The detailed hyperparameter settings, including training and unlearning, are listed in the Appendix.

\textbf{Adversary setup.}
We consider three poisoning strategies to construct the poisoned dataset \(\mathcal{D}_f\): \emph{Max Cost}, \emph{Max Reward}, and \emph{Min Reward}. We vary the poisoning ratio $\rho = |\mathcal{D}_f|/|\tilde{\mathcal{D}}|\in\{0.05, 0.15\}$. Each strategy uses a corresponding adversarial policy to generate trajectories \((s,a,r,c)\) with targeted properties: high cost, high reward, or low reward. Training on this hybrid dataset can therefore cause the safe RL policy to learn undesirable behaviors, leading to degraded reward or increased cost.

\vspace{-0.1in}
\subsection{Results}

\begin{figure*}[t]
\centering

\begin{subfigure}{0.99\textwidth}
    \centering
    \includegraphics[width=\textwidth]{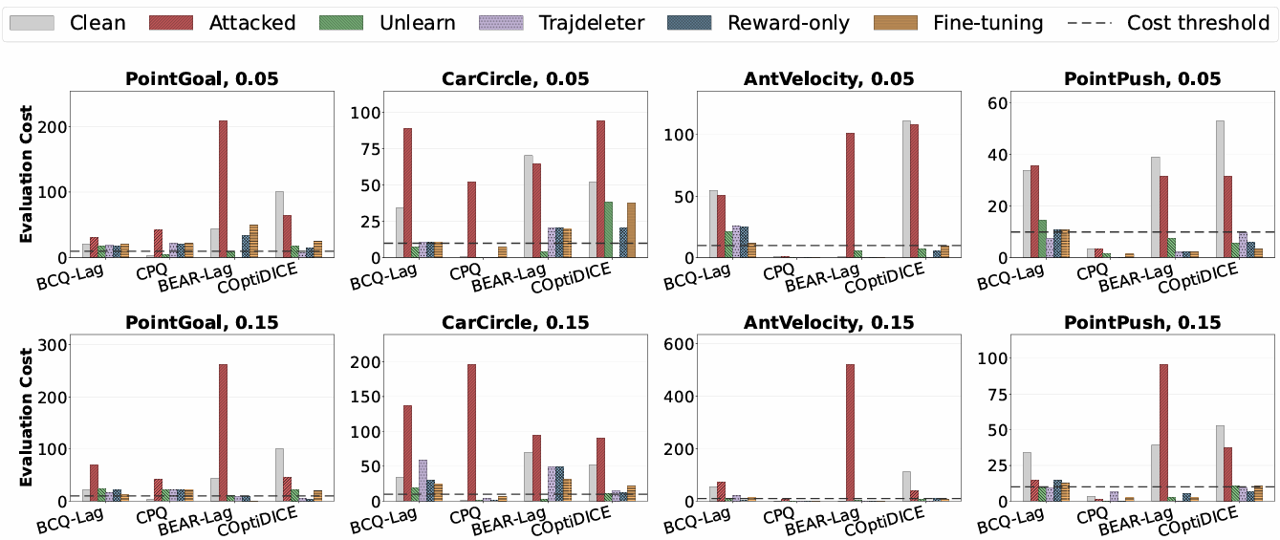}
\caption{Evaluation \textbf{cost} under the Max Cost attack across different environments and safe RL algorithms with poisoning ratios of \(5\%\) and \(15\%\). The green bars represent the cost after applying our safe unlearning method; lower values indicate better performance.}
    \label{fig:cost_max_cost}
\end{subfigure}

\vspace{0.08in}

\begin{subfigure}{0.99\textwidth}
    \centering
    \includegraphics[width=\textwidth]{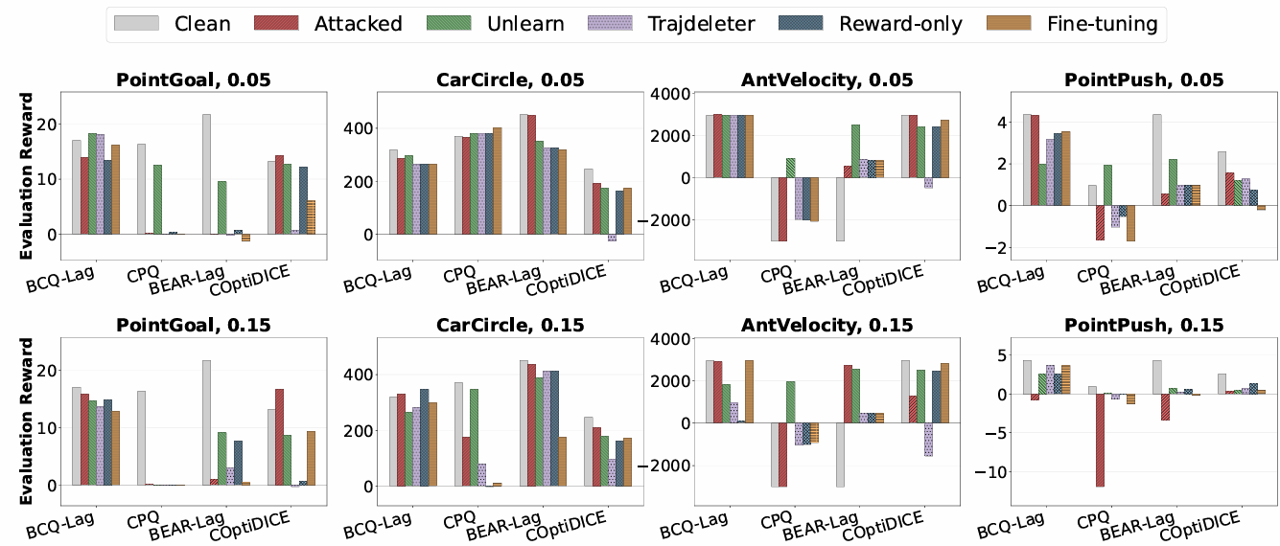}
\caption{Evaluation \textbf{reward} under the Max Cost attack across different environments and safe RL algorithms with poisoning ratios of \(5\%\) and \(15\%\). The green bars represent the reward after applying our safe unlearning method; higher values indicate better performance.}
    \label{fig:reward_max_cost}
\end{subfigure}
\vspace{-0.2in}
\label{fig:cost_unlearning_results}
\end{figure*}

We evaluate the effectiveness of our unlearning method in mitigating poison attacks by comparing evaluation cost and reward values across benchmarks and safe RL algorithms.

\textbf{Impact of attacks and safe unlearning.}
In Table \ref{tab:compact_unlearning}, we report the evaluation cost and reward of the attacked policies before and after applying our safe unlearning method under three poisoning attacks. Each entry shows the performance before/after 5000 unlearning steps. The bold post-unlearning value indicates an improvement, i.e., a lower cost for \textbf{C} or a higher reward for \textbf{R}. Overall, safe unlearning reduces cost or improves reward in many settings across different environments, attacks, and safe RL algorithms, demonstrating its general applicability. We also observe that some cases still involve reward-cost trade-offs, indicating that safe unlearning must balance performance recovery and safety recovery.

In Figure \ref{fig:reward_max_cost} and Figure \ref{fig:cost_max_cost}, we demonstrate the evaluation cost and reward under the Max Cost attack with four different offline safe RL algorithms. As shown, the Max Cost attack substantially increases the evaluation cost and degrades policy performance. The results show that our safe unlearning algorithm achieves the best performance among all baselines, outperforming both Trajdeleter and fine-tuning approaches, and has a slight performance drop compared with the clean policy.





\textbf{Effect of adaptive $\beta_f$.}
To balance forgetting and retention during safe unlearning, we use an adaptive $\beta_f$ to control the weight of the forget-gradient update. This design prevents the model from overfitting to the forget dataset while preserving useful knowledge from the keep dataset. Table~\ref{tab:eta_forget_ablation_5poison} reports the changes in evaluation cost and reward using adaptive $\beta_f$ and a fixed setting $\beta_f=1$. Here, $\beta_f=1$ means that the keep-gradient and forget-gradient terms are assigned the same weight in the actor update. The results show that adaptive $\beta_f$ achieves better performance in most cases, especially in retaining reward while maintaining competitive cost reduction.

\begin{table*}[h]
\centering
\tiny
\setlength{\tabcolsep}{3pt}
\renewcommand{\arraystretch}{1.08}
\resizebox{\textwidth}{!}{
\begin{tabular}{llcccccccc}
\toprule
\multirow{2}{*}{\textbf{Policy}}
& \multirow{2}{*}{\textbf{Variant}}
& \multicolumn{2}{c}{\textbf{PointGoal}}
& \multicolumn{2}{c}{\textbf{CarCircle}}
& \multicolumn{2}{c}{\textbf{AntVelocity}}
& \multicolumn{2}{c}{\textbf{PointPush}} \\
\cmidrule(lr){3-4}\cmidrule(lr){5-6}\cmidrule(lr){7-8}\cmidrule(lr){9-10}
& & $\Delta C\uparrow$ & $\Delta R\uparrow$
& $\Delta C\uparrow$ & $\Delta R\uparrow$
& $\Delta C\uparrow$ & $\Delta R\uparrow$
& $\Delta C\uparrow$ & $\Delta R\uparrow$ \\
\midrule

\multirow{2}{*}{BCQ-Lag}
& adaptive $\beta_f$
& \textbf{13.6} & \textbf{4.4}
& 81.2 & \textbf{10.8}
& 29.4 & \textbf{-22.9}
& 21.3 & -2.3 \\
& $\beta_f=1$
& 8.6 & -1.2
& \textbf{81.9} & -15.6
& \textbf{31.6} & -739.1
& \textbf{26.9} & \textbf{-1.5} \\

\midrule
\multirow{2}{*}{CPQ}
& adaptive $\beta_f$
& \textbf{37.4} & \textbf{12.5}
& 52.4 & 14.6
& -0.4 & \textbf{3933.0}
& \textbf{2.1} & \textbf{3.6} \\
& $\beta_f=1$
& 33.0 & 9.5
& 52.4 & 14.6
& \textbf{0.0} & 3910.3
& 1.0 & 3.5 \\

\midrule
\multirow{2}{*}{BEAR-Lag}
& adaptive $\beta_f$
& \textbf{199.1} & \textbf{9.5}
& 55.8 & \textbf{-202.5}
& 95.5 & \textbf{1973.4}
& 24.1 & \textbf{1.6} \\
& $\beta_f=1$
& 185.4 & 0.1
& 55.8 & -253.3
& \textbf{101.3} & 300.4
& \textbf{29.1} & 0.4 \\

\midrule
\multirow{2}{*}{COptiDICE}
& adaptive $\beta_f$
& 47.2 & \textbf{-1.6}
& 55.8 & \textbf{-16.0}
& 100.7 & \textbf{-525.9}
& \textbf{25.7} & \textbf{-0.4} \\
& $\beta_f=1$
& \textbf{49.8} & -6.9
& \textbf{67.9} & -20.4
& \textbf{108.0} & -632.7
& 23.4 & -0.5 \\

\bottomrule
\end{tabular}
}
\caption{Effect of adaptive forget-gradient weight under Max Cost attack with 5\% poison. Each entry reports the improvement relative to the poisoned policy, where $\Delta C=C_{\mathrm{poisoned}}-C_{\mathrm{unlearn}}$ and $\Delta R=R_{\mathrm{unlearn}}-R_{\mathrm{poisoned}}$. Bold values indicate the better variant between adaptive and fixed $\beta_f$.}
\vspace{-0.15in}
\label{tab:eta_forget_ablation_5poison}
\end{table*}

\textbf{Time cost of unlearning.}
In Table \ref{tab:runtime_cpq_maxcost15}, we compare the runtime of our safe unlearning method with baseline defenses and training from scratch on clean data. Since both unlearning and the baseline defenses require only a small number of update steps, they are much more efficient than retraining a clean policy from scratch. This computational efficiency is a key advantage of unlearning-based defenses, especially when the original training process requires a large number of environment or optimization steps.

\begin{table}[t]
\centering
\small
\setlength{\tabcolsep}{4pt}
\renewcommand{\arraystretch}{1.1}
\begin{tabular}{lcccc}
\toprule
\multirow{2}{*}{\textbf{Method}} & \multicolumn{4}{c}{\textbf{Task}} \\
\cmidrule(lr){2-5}
& PointGoal & CarCircle & AntVelocity & PointPush \\
\midrule
Retraining      & 132.0 & 276.1 & 252.4 & 212.3 \\
Safe Unlearning &  5.5 &   9.2 &   7.3 &   6.3 \\
Trajdeleter     &  3.1 &   6.6 &   5.4 &   3.4 \\
Reward-only     &  3.1 &   6.4 &   5.3 &   3.2 \\
Fine-tuning     &  3.6 &   7.0 &   6.2 &   5.5 \\
\bottomrule
\end{tabular}
\caption{Computation overhead in minutes using CPQ. }
\vspace{-0.2in}
\label{tab:runtime_cpq_maxcost15}
\end{table}

%% file: Sections/07_conclusion.tex
\vspace{-0.1in}
\section{Limitations}
Although Safe-RULE provides an effective defense for offline safe RL against poisoning attacks, it still has several limitations. First, our method assumes that the keep dataset \(\mathcal{D}_k\) and forget dataset \(\mathcal{D}_f\) can be identified or separated. If poisoned samples cannot be detected or isolated, Safe-RULE cannot directly remove their influence. Second, while Safe-RULE can reduce the impact of poisoning attacks, the unlearned policy may not fully recover the reward and cost performance of a clean policy trained only on clean data. Third, our current experiments are conducted in simulation environments, and further evaluation on real robotic or autonomous systems is needed to validate the practical effectiveness of the method.

\vspace{-0.1in}
\section{Conclusion}

In this paper, we proposed \textbf{Safe Reinforcement Unlearning (Safe-RULE)} to mitigate poisoning attacks in offline safe reinforcement learning. Safe-RULE jointly unlearns the critic and actor networks to remove the influence of poisoned data while preserving useful knowledge from the keep dataset. To balance forgetting and retention, we introduce an adaptive forget weight that controls the forget-gradient update and reduces over-forgetting. Different from standard unlearning, safe reinforcement unlearning must recover task performance while satisfying the safety constraint on the clean distribution. Experiments on offline safe RL benchmarks show that Safe-RULE effectively reduces poisoning effects and recovers safer policy behavior without retraining from scratch.


%% file: Sections/08_Appendix.tex
\newpage
\appendix
\section{Theoretical Analysis}
\label{sec:reward_reference_cost_margin}

In this section, we provide a theoretical explanation for two design choices in
Safe-RULE: the reward reference $\bar{Q}^r$ and the safety margin $\sigma$ in
the forget-cost target $\kappa+\sigma$. We show that $\bar{Q}^r$ is needed to
remove the reward-side attraction of poisoned samples, while $\sigma$ is needed
to obtain a non-vacuous separation between forget samples and the safety
threshold $\kappa$.

Recall that the critic forget objectives are
\begin{equation}
\label{eq:reward_ref_loss}
\mathcal{L}_{\mathrm{critic}}^r(\mathcal{D}_f)
=
\mathbb{E}_{(s,a)\sim\mathcal{D}_f}
\left[
\mathrm{softplus}
\left(
Q_\phi^r(s,a)-\bar{Q}^r
\right)
\right],
\end{equation}
and
\begin{equation}
\label{eq:cost_margin_loss}
\mathcal{L}_{\mathrm{critic}}^c(\mathcal{D}_f)
=
\mathbb{E}_{(s,a)\sim\mathcal{D}_f}
\left[
\mathrm{softplus}
\left(
\kappa+\sigma-Q_\psi^c(s,a)
\right)
\right].
\end{equation}

\paragraph{Reward reference.}
We first show that the reward reference $\bar{Q}^r$ gives an explicit upper
bound on the reward-side influence of the forget data. This is important
because poisoned trajectories may still attract the actor if their predicted
reward remains high, even when their cost values are corrected.

\begin{lemma}[Reward reference suppresses poisoned reward]
\label{lem:reward_reference}
Assume that after critic unlearning, the unlearning reward critic loss for the forget data satisfies:
\[
\mathcal{L}_{\mathrm{critic}}^r(\mathcal{D}_f) \le \varepsilon_r ,
\]
where $\varepsilon_r \ge 0$ is the reward critic residual. Then the reward excess above $\bar{Q}^r$ on the forget set is bounded by
\begin{equation}
\mathbb{E}_{(s,a)\sim\mathcal{D}_f}
\left[
\left(Q_\phi^r(s,a)-\bar{Q}^r\right)_+
\right]
\le
\varepsilon_r .
\end{equation}
Moreover, for any $u>0$,
\begin{equation}
\Pr_{(s,a)\sim\mathcal{D}_f}
\left(
Q_\phi^r(s,a) \ge \bar{Q}^r + u
\right)
\le
\frac{\varepsilon_r}{u}.
\end{equation}
\end{lemma}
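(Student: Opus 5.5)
The plan is to compare the softplus with the positive part pointwise, and then apply Markov's inequality to the resulting nonnegative random variable. The key elementary fact is
\[
\mathrm{softplus}(x)=\log(1+e^{x})\ \ge\ x_+ \qquad \text{for all } x\in\mathbb{R}.
\]
This holds for two reasons. First, $\log(1+e^{x})>\log e^{x}=x$. Second, $\log(1+e^{x})>0$. Hence softplus dominates $\max(x,0)$. The inequality is strict everywhere, but non-strict suffices.

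Apply this pointwise with $x=Q_\phi^r(s,a)-\bar{Q}^r$ for each $(s,a)$ in the support of $\mathcal{D}_f$. Here $\bar{Q}^r$ is treated as a fixed constant at the time the bound is evaluated: it is the 30th percentile of $y_r$ on $\mathcal{D}_k$, frozen for this statement. Taking expectation over $(s,a)\sim\mathcal{D}_f$ and using monotonicity of expectation gives
\[
\mathbb{E}_{\mathcal{D}_f}\!\left[(Q_\phi^r-\bar{Q}^r)_+\right]\le \mathbb{E}_{\mathcal{D}_f}\!\left[\mathrm{softplus}(Q_\phi^r-\bar{Q}^r)\right]=\mathcal{L}_{\mathrm{critic}}^r(\mathcal{D}_f)\le\varepsilon_r.
\]
This is the first claim. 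Integrability is automatic, since $\mathcal{D}_f$ is a finite dataset and so the expectation is an empirical average.

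For the tail bound, let $Z=(Q_\phi^r(s,a)-\bar{Q}^r)_+\ge 0$. For $u>0$, the event $\{Q_\phi^r\ge \bar{Q}^r+u\}$ coincides with $\{Z\ge u\}$. Markov's inequality then gives $\Pr(Z\ge u)\le \mathbb{E}[Z]/u\le \varepsilon_r/u$.

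There is no real obstacle here. The only step needing care is verifying the pointwise inequality $\mathrm{softplus}(x)\ge x_+$ on both sides of $0$, and noting that the reference $\bar{Q}^r$ must be held fixed rather than depending on the sample $(s,a)$.
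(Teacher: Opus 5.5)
Your proposal is correct and follows the paper's proof essentially verbatim: the pointwise bound $\mathrm{softplus}(x)\ge (x)_+$, applied with $x=Q_\phi^r(s,a)-\bar{Q}^r$ and averaged over $\mathcal{D}_f$, gives the first claim, and Markov's inequality on the nonnegative variable $(Q_\phi^r(s,a)-\bar{Q}^r)_+$ gives the tail bound. Your explicit justification of the softplus inequality and your remark that $\bar{Q}^r$ must be held fixed are details the paper leaves implicit.
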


\begin{proof}
For any $x\in\mathbb{R}$, we have
\[
\mathrm{softplus}(x)=\log(1+e^x)\ge (x)_+ .
\]
Applying this inequality to
$x=Q_\phi^r(s,a)-\bar{Q}^r$ gives
\begin{equation}
\mathbb{E}_{(s,a)\sim\mathcal{D}_f}
\left[
\left(Q_\phi^r(s,a)-\bar{Q}^r\right)_+
\right]
\le
\mathcal{L}_{\mathrm{critic}}^r(\mathcal{D}_f)
\le
\varepsilon_r .
\end{equation}
For the second claim, by Markov's inequality,
\begin{align}
\Pr_{(s,a)\sim\mathcal{D}_f}
\left(
Q_\phi^r(s,a)\ge \bar{Q}^r+u
\right)
&=
\Pr_{(s,a)\sim\mathcal{D}_f}
\left(
\left(Q_\phi^r(s,a)-\bar{Q}^r\right)_+ \ge u
\right) \\
&\le
\frac{
\mathbb{E}_{(s,a)\sim\mathcal{D}_f}
\left[
\left(Q_\phi^r(s,a)-\bar{Q}^r\right)_+
\right]
}{u}
\le
\frac{\varepsilon_r}{u}.
\end{align}
\end{proof}

Lemma~\ref{lem:reward_reference} shows that once the critic reward forget loss is
small, poisoned samples cannot keep large reward values above the reference
$\bar{Q}^r$. In our method, $\bar{Q}^r$ is chosen as a low quantile of the
reward backup $y_r$ computed on $\mathcal{D}_k$. Therefore, the reward critic
does not collapse forget samples to an arbitrary constant; instead, it pushes
their reward values below a typical clean-data return level.

This also explains why the reward reference is necessary. If the reward
suppression term is removed, then the forget objective no longer constrains
$Q_\phi^r$ on $\mathcal{D}_f$. In that case, even if the cost critic marks the
forget samples as unsafe, the reward critic can still assign high reward to
them. The actor may then continue to receive reward-side gradients that pull it
back toward the poisoned behavior. The reference $\bar{Q}^r$ prevents this
failure by explicitly bounding the reward-side attraction of $\mathcal{D}_f$.

\paragraph{Cost margin.}
We next show why the forget-cost target should be $\kappa+\sigma$ rather than
only $\kappa$. The margin $\sigma$ gives a strict separation between forget
samples and the safety threshold.

\begin{lemma}[Cost margin gives non-vacuous safety separation]
\label{lem:cost_margin}
Assume that after critic unlearning,
\[
\mathcal{L}_{\mathrm{critic}}^c(\mathcal{D}_f) \le \varepsilon_c .
\]
Then the probability that forget samples are still predicted as safe is bounded
by
\begin{equation}
\Pr_{(s,a)\sim\mathcal{D}_f}
\left(
Q_\psi^c(s,a) \le \kappa
\right)
\le
\frac{\varepsilon_c}{\sigma}.
\end{equation}
\end{lemma}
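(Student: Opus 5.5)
The argument mirrors the proof of Lemma~\ref{lem:reward_reference}: compare softplus to the hinge, then apply Markov's inequality. The only change is that the event of interest now sits a distance $\sigma$ inside the region where the hinge is active.

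First, using $\mathrm{softplus}(x)=\log(1+e^x)\ge (x)_+$ for all $x\in\mathbb{R}$ with $x=\kappa+\sigma-Q_\psi^c(s,a)$, we get
\[
\mathbb{E}_{(s,a)\sim\mathcal{D}_f}\left[\left(\kappa+\sigma-Q_\psi^c(s,a)\right)_+\right]\le \mathcal{L}_{\mathrm{critic}}^c(\mathcal{D}_f)\le \varepsilon_c .
\]
The random variable $Z=(\kappa+\sigma-Q_\psi^c(s,a))_+$ is therefore nonnegative with mean at most $\varepsilon_c$.

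Next, we identify the event. Whenever $Q_\psi^c(s,a)\le\kappa$, we have $\kappa+\sigma-Q_\psi^c(s,a)\ge\sigma>0$, so $Z\ge\sigma$. This gives the inclusion $\{Q_\psi^c\le\kappa\}\subseteq\{Z\ge\sigma\}$. By Markov's inequality with threshold $u=\sigma>0$,
\[
\Pr_{(s,a)\sim\mathcal{D}_f}\left(Q_\psi^c(s,a)\le\kappa\right)\le\Pr(Z\ge\sigma)\le \frac{\mathbb{E}[Z]}{\sigma}\le\frac{\varepsilon_c}{\sigma}.
\]

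No step is a real obstacle. The only point needing care is that $\sigma>0$: it is stated as a positive constant, and it is exactly what makes the Markov threshold nonzero. We will remark that with $\sigma=0$ the event $\{Q_\psi^c\le\kappa\}$ only gives $Z\ge 0$, so no bound follows. This is the sense in which the margin makes the separation "non-vacuous." Measurability is not an issue, since $\mathcal{D}_f$ is a finite empirical distribution.
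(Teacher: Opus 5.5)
Your proposal is correct and follows the paper's proof step for step: you bound the hinge by softplus to get $\mathbb{E}_{\mathcal{D}_f}[(\kappa+\sigma-Q_\psi^c)_+]\le\varepsilon_c$, note the inclusion $\{Q_\psi^c\le\kappa\}\subseteq\{(\kappa+\sigma-Q_\psi^c)_+\ge\sigma\}$, and apply Markov's inequality at threshold $\sigma$. Your remark that $\sigma>0$ is what keeps the Markov step non-vacuous is the same point the paper draws from the lemma.
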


\begin{proof}
Again using $\mathrm{softplus}(x)\ge (x)_+$, we have
\begin{equation}
\mathbb{E}_{(s,a)\sim\mathcal{D}_f}
\left[
\left(\kappa+\sigma-Q_\psi^c(s,a)\right)_+
\right]
\le
\mathcal{L}_{\mathrm{critic}}^c(\mathcal{D}_f)
\le
\varepsilon_c .
\end{equation}
If $Q_\psi^c(s,a)\le\kappa$, then
\[
\kappa+\sigma-Q_\psi^c(s,a)\ge \sigma .
\]
Therefore,
\begin{align}
\Pr_{(s,a)\sim\mathcal{D}_f}
\left(
Q_\psi^c(s,a)\le\kappa
\right)
&\le
\Pr_{(s,a)\sim\mathcal{D}_f}
\left(
\left(\kappa+\sigma-Q_\psi^c(s,a)\right)_+ \ge \sigma
\right) \\
&\le
\frac{
\mathbb{E}_{(s,a)\sim\mathcal{D}_f}
\left[
\left(\kappa+\sigma-Q_\psi^c(s,a)\right)_+
\right]
}{\sigma}
\le
\frac{\varepsilon_c}{\sigma},
\end{align}
where the second inequality follows from Markov's inequality.
\end{proof}

Lemma~\ref{lem:cost_margin} shows that the margin $\sigma$ is essential for a
non-vacuous guarantee. If we only push the forget cost values to $\kappa$, then
$\sigma=0$ and the bound becomes meaningless. In this case, small critic errors
or actor updates can move the predicted cost back below $\kappa$, making the
forget samples appear safe again. By using $\kappa+\sigma$, Safe-RULE creates a
buffer above the safety threshold.

The margin also provides robustness in continuous action spaces. Suppose
$Q_\psi^c(s,\cdot)$ is $L_Q$-Lipschitz in the action, and after critic
unlearning the poisoned action satisfies
\begin{equation}
Q_\psi^c(s_f,a_f^{\mathrm{poison}})
\ge
\kappa+\sigma-\epsilon_c,
\end{equation}
where $\epsilon_c<\sigma$. Then for any action $a$ satisfying
\[
\|a-a_f^{\mathrm{poison}}\|
\le
\frac{\sigma-\epsilon_c}{L_Q},
\]
we have
\begin{align}
Q_\psi^c(s_f,a)
&\ge
Q_\psi^c(s_f,a_f^{\mathrm{poison}})
-
L_Q\|a-a_f^{\mathrm{poison}}\| \\
&\ge
\kappa+\sigma-\epsilon_c
-
L_Q\cdot
\frac{\sigma-\epsilon_c}{L_Q}
=
\kappa .
\end{align}
Thus, when $\sigma>\epsilon_c$, not only the poisoned action itself but also a
neighborhood around it is treated as unsafe. Without the margin, this certified
neighborhood has zero radius.

\paragraph{Combined implication.}
Combining Lemma~\ref{lem:reward_reference} and Lemma~\ref{lem:cost_margin},
if
\[
\mathcal{L}_{\mathrm{critic}}^r(\mathcal{D}_f)\le\varepsilon_r,
\qquad
\mathcal{L}_{\mathrm{critic}}^c(\mathcal{D}_f)\le\varepsilon_c,
\]
then Safe-RULE satisfies
\begin{align}
\mathbb{E}_{(s,a)\sim\mathcal{D}_f}
\left[
\left(Q_\phi^r(s,a)-\bar{Q}^r\right)_+
\right]
&\le
\varepsilon_r, \\
\Pr_{(s,a)\sim\mathcal{D}_f}
\left(
Q_\psi^c(s,a)\le\kappa
\right)
&\le
\frac{\varepsilon_c}{\sigma}.
\end{align}
The first inequality shows that the reward reference removes high reward
attraction from poisoned samples. The second inequality shows that the cost
margin makes poisoned samples reliably unsafe with respect to the threshold
$\kappa$. Therefore, $\bar{Q}^r$ and $\sigma$ are necessary to obtain
non-vacuous reward suppression and cost separation guarantees.

\section{Additional Experiments}

\textbf{Evaluation reward and cost for Min Reward and Max Reward attacks.}
We show the evaluation cost and reward across different environments and safe RL algorithms with poisoning ratios in Figure~\ref{fig:cost_max_cost} and Figure~\ref{fig:reward_max_cost}. Here, we continue to show the evaluation reward and cost for Min Reward and Max Reward poisoning attacks.

\begin{figure*}[h]
\centering
\begin{subfigure}{0.99\textwidth}
    \centering
    \includegraphics[width=\textwidth]{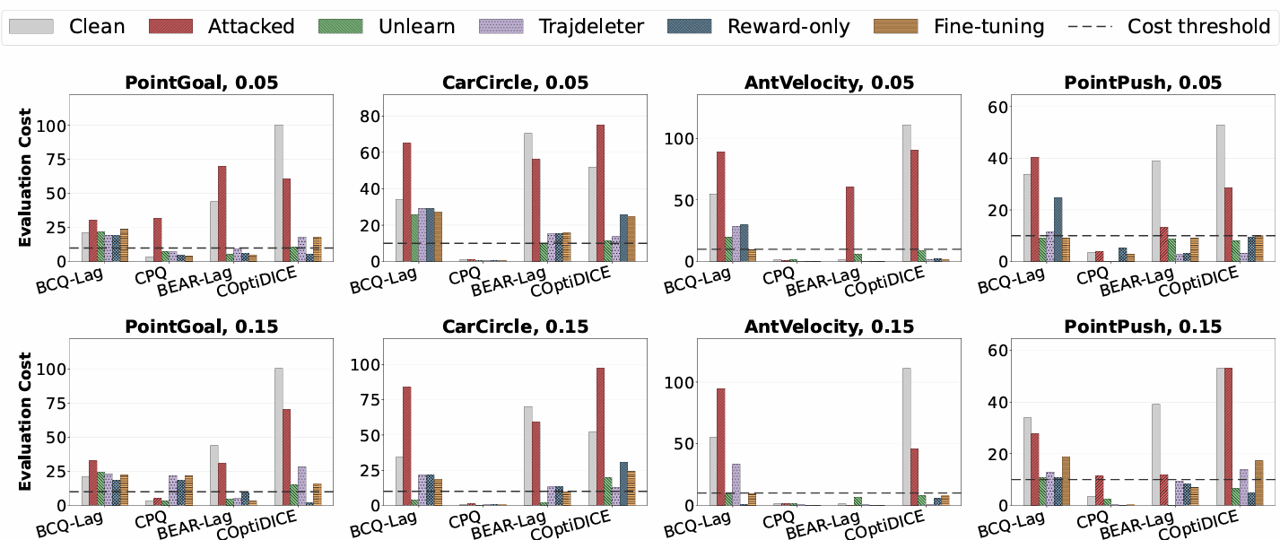}
    \caption{Evaluation \textbf{cost} under the Max Reward attack across different environments and safe RL algorithms with poisoning ratios of 5\% and 15\%. Lower values indicate better performance.}
    \label{fig:cost_max_reward}
\end{subfigure}

\vspace{0.08in}

\begin{subfigure}{0.99\textwidth}
    \centering
    \includegraphics[width=\textwidth]{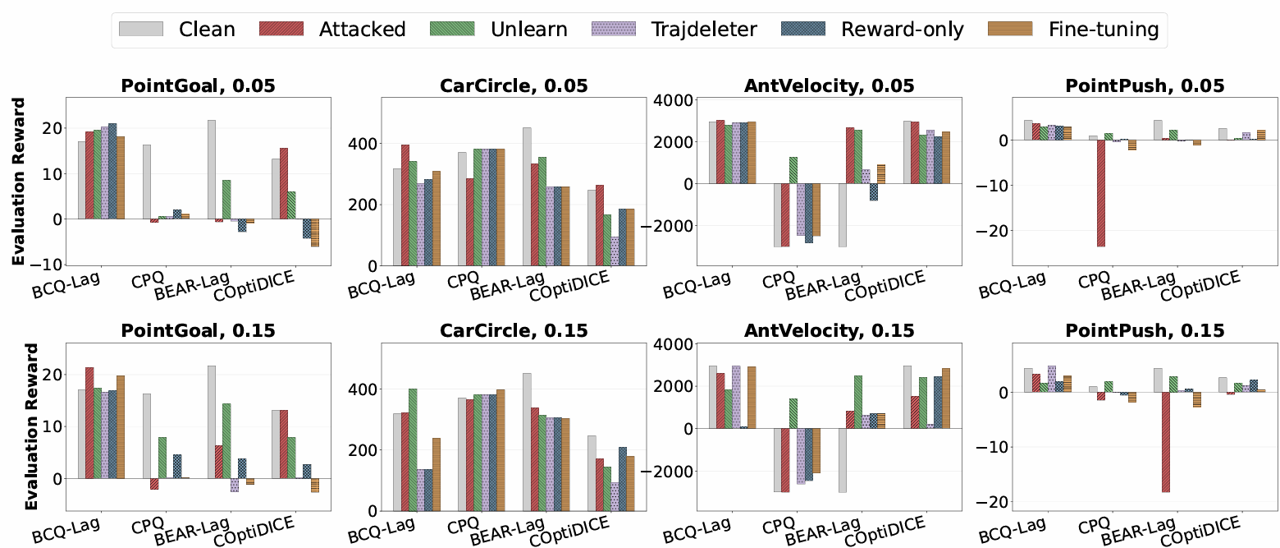}
    \caption{Evaluation \textbf{reward} under the Max Reward attack across different environments and safe RL algorithms with poisoning ratios of 5\% and 15\%. Higher values indicate better performance.}
    \label{fig:reward_max_reward}
\end{subfigure}

\vspace{-0.1in}
\caption{Evaluation cost and reward under the Max Reward poisoning attack. The results show that Safe-RULE can mitigate reward-targeted poisoning attacks while maintaining the reward--cost trade-off across different offline safe RL algorithms.}
\label{fig:evaluation_max_reward}
\end{figure*}

\begin{figure*}[h]
\centering
\begin{subfigure}{0.99\textwidth}
    \centering
    \includegraphics[width=\textwidth]{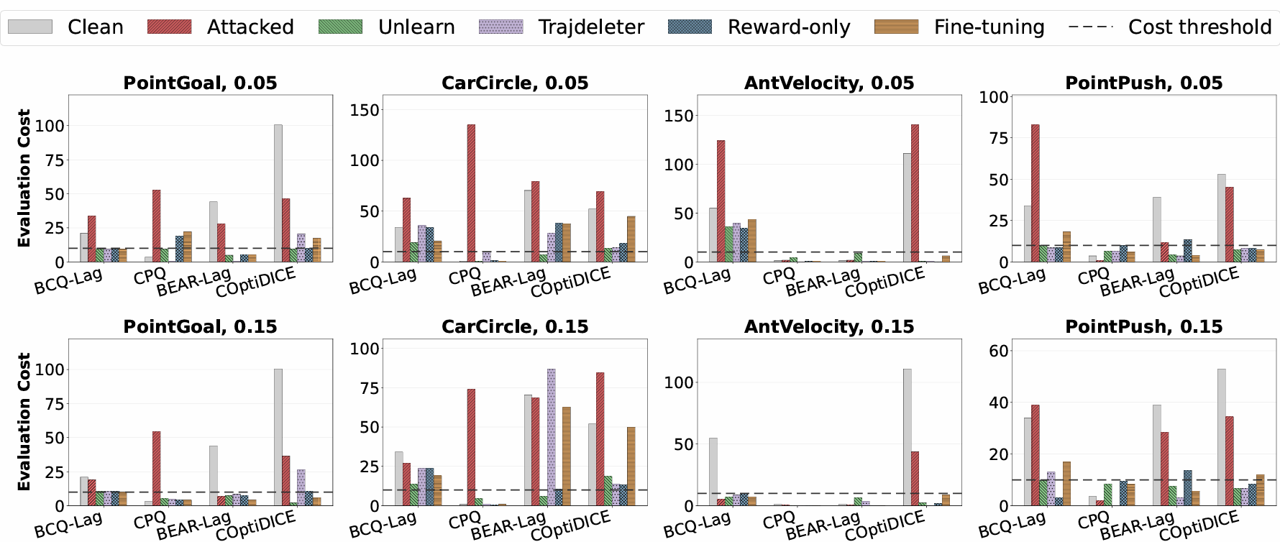}
    \caption{Evaluation \textbf{cost} under the Min Reward attack across different environments and safe RL algorithms with poisoning ratios of 5\% and 15\%. Lower values indicate better performance.}
    \label{fig:cost_min_reward}
\end{subfigure}

\vspace{0.08in}

\begin{subfigure}{0.99\textwidth}
    \centering
    \includegraphics[width=\textwidth]{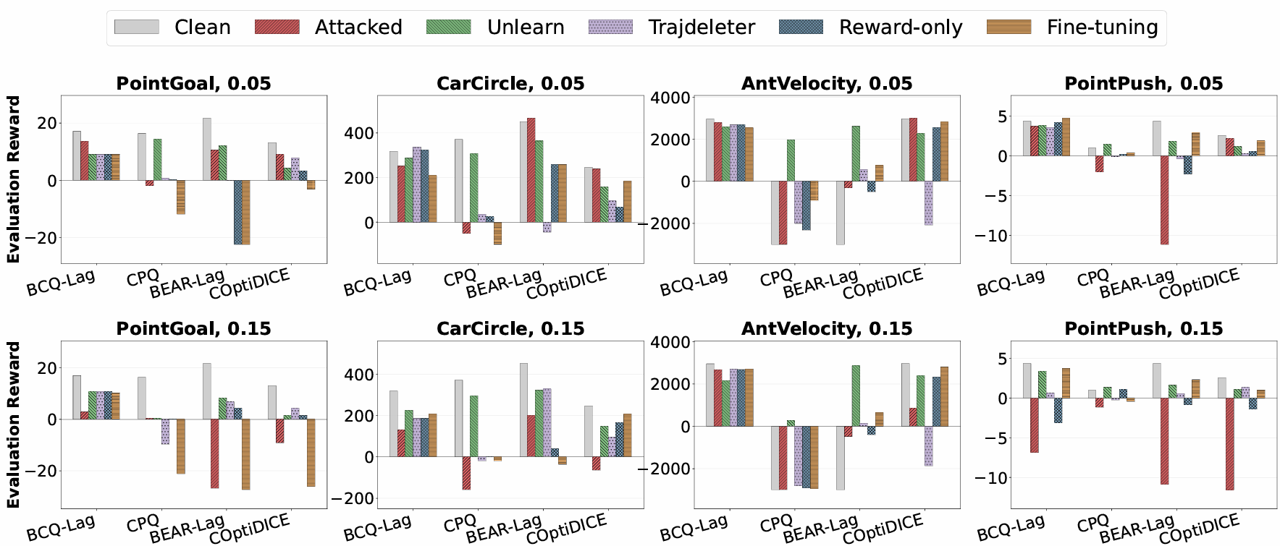}
    \caption{Evaluation \textbf{reward} under the Min Reward attack across different environments and safe RL algorithms with poisoning ratios of 5\% and 15\%. Higher values indicate better performance.}
    \label{fig:reward_min_reward}
\end{subfigure}

\vspace{-0.1in}
\caption{Evaluation cost and reward under the Min Reward poisoning attack. Although this attack does not directly maximize the cost, it can still degrade the learned policy by injecting low-reward trajectories with misleading labels. Safe-RULE improves policy recovery by jointly considering reward and cost during unlearning.}
\label{fig:evaluation_min_reward}
\end{figure*}

\textbf{Trajdeleter with cost.}
Trajdeleter~\cite{gong2024trajdeleter} unlearns the forget data by suppressing the advantage function of the forget data. This strategy can also be implemented with the cost advantage in safe RL. We therefore extend Trajdeleter as a cost-aware Trajdeleter baseline, where both reward and cost advantages are used to update the actor network of the safe policy.

Specifically, let
\[
A_r(s,a)=Q^r(s,a)-V^r(s),
\qquad
A_c(s,a)=Q^c(s,a)-V^c(s)
\]
denote the reward advantage and cost advantage, respectively. The original Trajdeleter mainly suppresses the reward advantage on the forget data. In the cost-aware version, we additionally use the cost advantage to make the forget data less desirable for the policy. The forget objective is written as
\begin{equation}
\mathcal{L}_{\mathrm{Traj}}(\mathcal{D}_f)
=
\mathbb{E}_{(s,a)\sim\mathcal{D}_f}
\left[
A_r(s,\pi_\theta(s))
+
\lambda_c\,
\mathrm{softplus}\!\left(\bar{A}_c - A_c(s,\pi_\theta(s))\right)
\right],
\end{equation}
where $\bar{A}_c$ is the target cost-advantage reference and $\lambda_c$ controls the cost-aware forgetting strength. The first term follows the reward-side forgetting objective of Trajdeleter, while the second term uses the cost signal to further move the actor away from poisoned regions.

The overall actor objective for cost-aware Trajdeleter is
\begin{equation}
\mathcal{L}_{\mathrm{Traj}}^{\mathrm{total}}
=
\mathcal{L}_{\mathrm{keep}}(\mathcal{D}_k)
+
\beta_f \mathcal{L}_{\mathrm{Traj}}(\mathcal{D}_f),
\end{equation}
where $\mathcal{L}_{\mathrm{keep}}(\mathcal{D}_k)$ is the original keep-data objective and $\beta_f$ controls the forgetting strength.

In Table~\ref{tab:trajdeleter_cost_aware_compare}, we show the results after applying the original Trajdeleter and cost-aware Trajdeleter. The cost-aware Trajdeleter improves the performance of the safe policy in most cases. However, its performance is still worse than Safe-RULE, since Safe-RULE jointly unlearns both critic and actor networks and directly suppresses the poisoned value structure.

\begin{table*}[h]
\centering
\scriptsize
\setlength{\tabcolsep}{3pt}
\renewcommand{\arraystretch}{1.15}
\resizebox{\textwidth}{!}{%
\begin{tabular}{@{}llc*{8}{c}@{}}
\toprule
\multirow{2}{*}{\textbf{Policy}}
& \multirow{2}{*}{\textbf{Attack}}
& \multirow{2}{*}{\textbf{Metric}}
& \multicolumn{2}{c}{\textbf{PointGoal}}
& \multicolumn{2}{c}{\textbf{CarCircle}}
& \multicolumn{2}{c}{\textbf{AntVelocity}}
& \multicolumn{2}{c}{\textbf{PointPush}} \\
\cmidrule(lr){4-5}\cmidrule(lr){6-7}\cmidrule(lr){8-9}\cmidrule(lr){10-11}
& & & \textbf{5\%} & \textbf{15\%}
& \textbf{5\%} & \textbf{15\%}
& \textbf{5\%} & \textbf{15\%}
& \textbf{5\%} & \textbf{15\%} \\
\midrule

\multirow{6}{*}{BCQ-Lag}
& \multirow{2}{*}{Max Cost}
& \textbf{C} & 19.9 / \textbf{7.4} & 17.0 / \textbf{16.3} & 10.4 / \textbf{1.7} & 59.2 / \textbf{25.8} & 25.5 / \textbf{7.7} & 22.8 / \textbf{17.9} & 7.5 / 12.6 & 9.4 / 10.5 \\
& & \textbf{R} & 18.2 / 15.4 & 13.7 / \textbf{16.6} & 264.1 / \textbf{272.3} & 280.6 / \textbf{287.0} & 2964.9 / 2785.6 & 951.5 / \textbf{1679.1} & 3.2 / \textbf{4.1} & 3.7 / 3.6 \\
\cmidrule(lr){2-11}
& \multirow{2}{*}{Max Reward}
& \textbf{C} & 19.4 / 26.4 & 23.1 / \textbf{21.8} & 29.3 / \textbf{16.7} & 21.5 / \textbf{16.3} & 28.7 / \textbf{9.0} & 33.2 / \textbf{31.4} & 11.5 / 13.3 & 12.9 / \textbf{9.7} \\
& & \textbf{R} & 20.3 / 20.2 & 16.6 / 12.3 & 268.7 / 234.9 & 135.2 / 120.3 & 2912.9 / 2873.9 & 2945.6 / \textbf{2952.0} & 3.3 / 2.6 & 4.8 / 2.5 \\
\cmidrule(lr){2-11}
& \multirow{2}{*}{Min Reward}
& \textbf{C} & 9.4 / \textbf{7.1} & 10.8 / \textbf{6.1} & 35.7 / \textbf{17.8} & 23.7 / \textbf{19.0} & 39.3 / \textbf{35.9} & 8.8 / \textbf{8.6} & 8.4 / 10.8 & 13.0 / \textbf{9.7} \\
& & \textbf{R} & 9.1 / \textbf{13.6} & 10.8 / -2.3 & 337.6 / 304.8 & 187.1 / \textbf{282.8} & 2693.3 / 2479.9 & 2709.9 / \textbf{2733.1} & 3.5 / 1.3 & 0.6 / \textbf{3.0} \\
\midrule

\multirow{6}{*}{CPQ}
& \multirow{2}{*}{Max Cost}
& \textbf{C} & 21.9 / 39.5 & 21.9 / \textbf{18.3} & 0.0 / 0.0 & 4.0 / 7.2 & 0.0 / 0.0 & 0.0 / 0.0 & 0.0 / 1.4 & 6.8 / \textbf{0.0} \\
& & \textbf{R} & 0.1 / 0.0 & 0.1 / -1.4 & 381.3 / 381.3 & 77.0 / 46.9 & -1980.0 / \textbf{-1763.4} & -1044.8 / -1046.1 & -1.0 / \textbf{-0.1} & -0.6 / \textbf{-0.1} \\
\cmidrule(lr){2-11}
& \multirow{2}{*}{Max Reward}
& \textbf{C} & 7.2 / 7.2 & 21.9 / 39.5 & 0.0 / 0.0 & 0.0 / 0.0 & 0.0 / 0.0 & 0.6 / \textbf{0.0} & 0.0 / 0.0 & 0.0 / 0.0 \\
& & \textbf{R} & 0.7 / 0.7 & 0.1 / 0.0 & 381.1 / 381.1 & 380.3 / 380.3 & -2473.9 / \textbf{-2273.7} & -2629.3 / \textbf{-2159.0} & -0.5 / -1.7 & -0.1 / \textbf{0.0} \\
\cmidrule(lr){2-11}
& \multirow{2}{*}{Min Reward}
& \textbf{C} & 0.0 / 20.9 & 4.9 / 9.3 & 9.9 / \textbf{0.0} & 0.0 / 0.0 & 0.1 / \textbf{0.0} & 0.0 / 0.0 & 6.6 / \textbf{1.7} & 0.0 / 0.0 \\
& & \textbf{R} & 0.6 / 0.1 & -9.8 / \textbf{-0.1} & 34.7 / \textbf{36.3} & -20.5 / \textbf{9.0} & -2001.5 / \textbf{-1371.7} & -2804.8 / -2888.8 & -0.1 / \textbf{0.2} & -0.3 / \textbf{0.1} \\
\midrule

\multirow{6}{*}{BEAR-Lag}
& \multirow{2}{*}{Max Cost}
& \textbf{C} & 0.0 / 0.0 & 8.4 / 10.6 & 20.4 / \textbf{8.1} & 49.0 / \textbf{1.0} & 0.0 / 0.0 & 0.0 / 0.0 & 2.4 / 2.4 & 0.0 / 0.0 \\
& & \textbf{R} & -0.2 / \textbf{0.0} & 3.0 / \textbf{9.9} & 324.1 / 6.2 & 411.9 / 17.1 & 850.4 / \textbf{862.7} & 478.4 / \textbf{517.0} & 1.0 / 1.0 & 0.2 / -0.1 \\
\cmidrule(lr){2-11}
& \multirow{2}{*}{Max Reward}
& \textbf{C} & 9.6 / \textbf{7.4} & 5.0 / \textbf{3.4} & 15.4 / 15.4 & 13.0 / \textbf{12.1} & 0.4 / 0.7 & 0.0 / 0.0 & 2.7 / 5.8 & 9.5 / \textbf{0.0} \\
& & \textbf{R} & -0.5 / \textbf{0.5} & -2.6 / \textbf{-2.0} & 256.8 / 256.8 & 305.1 / -44.2 & 663.2 / 537.0 & 621.6 / \textbf{702.0} & -0.2 / -0.8 & 0.3 / -0.2 \\
\cmidrule(lr){2-11}
& \multirow{2}{*}{Min Reward}
& \textbf{C} & 0.0 / 8.8 & 8.5 / \textbf{0.0} & 27.9 / \textbf{2.3} & 86.9 / \textbf{3.9} & 0.0 / 0.0 & 3.6 / \textbf{0.0} & 3.6 / 5.0 & 3.0 / \textbf{2.0} \\
& & \textbf{R} & -0.1 / -1.4 & 6.8 / 0.1 & -45.2 / \textbf{12.0} & 328.3 / 12.9 & 551.2 / -448.1 & 150.7 / -308.3 & -0.4 / \textbf{-0.3} & 0.5 / 0.3 \\
\midrule

\multirow{6}{*}{COptiDICE}
& \multirow{2}{*}{Max Cost}
& \textbf{C} & 6.9 / 11.6 & 5.8 / 7.6 & 0.0 / 12.1 & 15.1 / \textbf{12.5} & 0.0 / 8.9 & 0.0 / 1.2 & 9.8 / \textbf{8.3} & 9.4 / \textbf{7.4} \\
& & \textbf{R} & 0.6 / \textbf{5.3} & -0.3 / \textbf{9.9} & -25.3 / \textbf{124.8} & 95.7 / \textbf{116.0} & -485.4 / \textbf{2647.4} & -1554.3 / \textbf{2593.9} & 1.3 / 0.5 & 0.8 / 0.5 \\
\cmidrule(lr){2-11}
& \multirow{2}{*}{Max Reward}
& \textbf{C} & 18.1 / \textbf{8.2} & 28.2 / \textbf{9.3} & 13.8 / \textbf{7.7} & 12.8 / \textbf{11.3} & 1.2 / \textbf{0.8} & 0.0 / 0.7 & 3.0 / 7.3 & 14.0 / \textbf{9.8} \\
& & \textbf{R} & 0.0 / \textbf{9.6} & 0.2 / \textbf{6.5} & 94.8 / \textbf{152.1} & 94.0 / \textbf{137.5} & 2562.4 / 2530.7 & 181.3 / \textbf{2555.1} & 1.6 / 0.4 & 1.1 / \textbf{1.9} \\
\cmidrule(lr){2-11}
& \multirow{2}{*}{Min Reward}
& \textbf{C} & 20.6 / \textbf{9.1} & 26.5 / \textbf{6.6} & 13.8 / \textbf{9.6} & 13.8 / 16.8 & 0.0 / 1.7 & 0.0 / 4.2 & 8.3 / 9.7 & 6.8 / \textbf{4.8} \\
& & \textbf{R} & 7.8 / \textbf{8.7} & 4.2 / \textbf{9.0} & 94.8 / \textbf{130.5} & 94.8 / \textbf{159.8} & -2080.5 / \textbf{2586.7} & -1875.6 / \textbf{2547.4} & 0.3 / \textbf{1.9} & 1.3 / 1.2 \\

\bottomrule
\end{tabular}%
}
\caption{Cost and reward comparison between non-cost-aware Trajdeleter and cost-aware Trajdeleter. Each entry reports non-cost-aware / cost-aware. Bold cost-aware values indicate improvement over non-cost-aware Trajdeleter: lower cost for \textbf{C}, higher reward for \textbf{R}.}
\label{tab:trajdeleter_cost_aware_compare}
\end{table*}

\begin{table*}[t]
\centering
\tiny
\setlength{\tabcolsep}{3pt}
\renewcommand{\arraystretch}{1.08}
\resizebox{\textwidth}{!}{
\begin{tabular}{llcccccccc}
\toprule
\multirow{2}{*}{\textbf{Policy}}
& \multirow{2}{*}{\textbf{Variant}}
& \multicolumn{2}{c}{\textbf{PointGoal}}
& \multicolumn{2}{c}{\textbf{CarCircle}}
& \multicolumn{2}{c}{\textbf{AntVelocity}}
& \multicolumn{2}{c}{\textbf{PointPush}} \\
\cmidrule(lr){3-4}\cmidrule(lr){5-6}\cmidrule(lr){7-8}\cmidrule(lr){9-10}
& & $C\downarrow$ & $R\uparrow$
& $C\downarrow$ & $R\uparrow$
& $C\downarrow$ & $R\uparrow$
& $C\downarrow$ & $R\uparrow$ \\
\midrule

\multirow{2}{*}{BCQ-Lag}
& with critic/value unlearning& \textbf{17.5} & 18.3
& \textbf{7.6} & \textbf{296.0}
& 21.4 & \textbf{2962.0}
& 14.4 & 2.0 \\
& without critic/value unlearning
& 21.1 & \textbf{21.0}
& 10.4 & 264.1
& \textbf{20.3} & 2954.0
& \textbf{10.3} & \textbf{3.4} \\

\midrule
\multirow{2}{*}{CPQ}
& with critic/value unlearning
& \textbf{4.7} & \textbf{12.6}
& 0.0 & 381.3
& 0.4 & 932.9
& \textbf{1.4} & \textbf{1.9} \\
& without critic/value unlearning
& 6.6 & 8.1
& 0.0 & 381.3
& \textbf{0.0} & \textbf{941.0}
& 8.2 & 0.5 \\

\midrule
\multirow{2}{*}{BEAR-Lag}
& with critic/value unlearning
& \textbf{8.9} & 9.5
& 8.8 & \textbf{244.9}
& 5.8 & \textbf{2517.8}
& 7.4 & \textbf{2.2} \\
& without critic/value unlearning
& 9.5 & \textbf{12.4}
& \textbf{7.8} & 123.1
& \textbf{3.8} & 1149.9
& \textbf{3.9} & 1.8 \\

\midrule
\multirow{2}{*}{COptiDICE}
& with critic/value unlearning
& 17.1 & \textbf{12.7}
& \textbf{38.3} & \textbf{175.0}
& 7.3 & \textbf{2416.0}
& \textbf{5.8} & 1.2 \\
& without critic/value unlearning
& \textbf{11.4} & 8.1
& {48.0} & 122.8
& \textbf{1.5} & 2106.0
& 12.9 & \textbf{2.2} \\

\bottomrule
\end{tabular}
}
\caption{Effect of critic and cost-critic unlearning under Max Cost attack with 5\% poison. Each entry reports the after-unlearning cost and reward. Lower cost and higher reward are better. Bold values indicate the better variant between with and without critic unlearning; ties are left unbolded.}
\vspace{-0.15in}
\label{tab:critic_unlearn_ablation_5poison_after}
\end{table*}
\textbf{Safe-RULE without critic unlearning.} To evaluate the role of critic unlearning, we compare Safe-RULE with a variant that only updates the actor while keeping the corrupted reward and cost critics fixed. As shown in Table~\ref{tab:critic_unlearn_ablation_5poison_after}, critic unlearning improves safety recovery in many settings because it directly removes the poisoned value structure that would otherwise continue to guide the actor toward unsafe behavior.

\begin{table*}[t]
\centering
\tiny
\setlength{\tabcolsep}{5pt}
\renewcommand{\arraystretch}{1.15}
\begin{tabular}{@{}llccc@{}}
\toprule
\multirow{2}{*}{\textbf{Policy}}
& \multirow{2}{*}{\textbf{Task}}
& \multicolumn{3}{c}{\textbf{Setting}} \\
\cmidrule(lr){3-5}
& & \textbf{0\%} & \textbf{50\%} & \textbf{100\%} \\
\midrule

\multirow{4}{*}{BCQ-Lag}
& PointGoal    & 17.5 / 15.6   & 25.1 / 18.8   & 21.3 / 13.7 \\
& CarCircle    & 9.1 / 270.8   & 3.9 / 242.9   & \textbf{9.5 / 271.4} \\
& AntVelocity  & 22.8 / 2959.5 & 22.2 / 2967.5 & 36.2 / 2868.5 \\
& PointPush    & 9.0 / 3.4     & \textbf{6.4 / 3.5} & 5.8 / 2.4 \\
\midrule

\multirow{4}{*}{CPQ}
& PointGoal    & 7.0 / 9.4     & \textbf{4.7 / 12.6} & 21.9 / 0.1 \\
& CarCircle    & \textbf{0.0 / 381.3} & \textbf{0.0 / 381.3} & \textbf{0.0 / 381.3} \\
& AntVelocity  & 0.0 / 950.4   & \textbf{0.9 / 1060.5} & 0.0 / -678.0 \\
& PointPush    & 9.3 / 1.5     & \textbf{1.4 / 1.9} & 0.0 / 0.1 \\
\midrule

\multirow{4}{*}{BEAR-Lag}
& PointGoal    & 5.7 / 6.9     & \textbf{8.9 / 9.5} & 21.9 / 0.1 \\
& CarCircle    & 6.3 / 206.7   & \textbf{8.8 / 244.9} & 1.1 / 0.1 \\
& AntVelocity  & 6.6 / 2477.0  & \textbf{5.9 / 2524.7} & 0.0 / 849.9 \\
& PointPush    & 4.2 / 1.9     & \textbf{7.4 / 2.2} & 2.4 / 1.0 \\
\midrule

\multirow{4}{*}{COptiDICE}
& PointGoal    & \textbf{4.6 / -2.8} & 17.1 / 12.7 & 46.8 / 15.5 \\
& CarCircle    & 30.0 / 113.0  & 36.7 / 177.6  & 50.6 / 48.6 \\
& AntVelocity  & 1.7 / 2586.7  & 6.2 / 2426.7  & \textbf{3.1 / 2807.3} \\
& PointPush    & 11.9 / 0.7    & \textbf{5.8 / 1.2} & 11.4 / 0.7 \\

\bottomrule
\end{tabular}

\caption{
Ablation results under different Reward reference $\bar{Q}^r$ settings. Each entry reports evaluation cost / reward. Bold entries indicate the setting with cost below 10 and the highest reward among safe settings for each policy-task pair. Ties are bolded.
}
\label{tab:percentage_ablation}
\end{table*}

\textbf{Effect of reward reference $\bar{Q}^r$.} Table~\ref{tab:percentage_ablation} studies the effect of different reward reference settings on the unlearning performance. Each entry reports evaluation cost / reward, where lower cost and higher reward are better. We bold the setting that achieves cost below the safety threshold of 10 while obtaining the highest reward among the safe settings for each policy-task pair. The results show that the best percentage is task and algorithm dependent, and in most cases the 50\% yields the best unlearn performance.

\section{Detailed Implementation Settings}
\label{sec:appendix_impl}

\begin{table}[h]
\centering
\small
\renewcommand{\arraystretch}{1.18}
\setlength{\tabcolsep}{6pt}

\begin{tabularx}{\linewidth}{|>{\raggedright\arraybackslash}X|>{\centering\arraybackslash}X|}
\hline
\textbf{Hyperparameter} & \textbf{Value} \\
\hline

\multicolumn{2}{|c|}{\textbf{Training and attack setting}} \\
\hline
Offline safe RL training steps & $10^6$ \\
Discount factor $\gamma$ & $0.99$ \\
Poisoning ratio $\rho$ & $\{0.05, 0.15\}$ \\
Adversarial policy training steps & $10^6$ \\
Unlearning steps & $5000$ \\
\hline

\multicolumn{2}{|c|}{\textbf{Safe-RULE setting}} \\
\hline
Keep critic weight $\alpha_k$ & $1.0$ \\
Forget critic weight $\alpha_f$ & $1.0$ \\
Cost threshold $\kappa$ & 10.0 \\
Forget-cost target & $\kappa+\sigma$ \\
Forget-cost margin $\sigma$ & $0.5$ \\
Reward reference $\bar{Q}^r$ & $50$th percentile of $y_r$ on $\mathcal{D}_k$ \\
\hline

\multicolumn{2}{|c|}{\textbf{Adaptive forget weight}} \\
\hline
Initial $\beta_f$ & $0.1$ \\
$\beta_f$ range & $[0.01, 1.0]$ \\
Update step size $\tau_\beta$ & $0.25$ \\
\hline

\end{tabularx}

\caption{
Hyperparameter settings used in the paper. We only list parameters that are directly used in the Safe-RULE objective, adaptive forget-weight update, attack setup, or experimental training setup.
}
\label{tab:hyperparameters}
\end{table}

\subsection{Hyperparameter Settings}
In Table~\ref{tab:hyperparameters}, we report the hyperparameter settings in our experiments. All the offline RL policies are trained with $10^6$ steps with the discount factor $0.99$, with the same other setting follows the OSRL~\cite{liu2023datasets}. For the Safe-RULE unlearning process, we assume the unlearning steps to 5000 and use the hyperparameter settings in Table~\ref{tab:hyperparameters}.

\textbf{Experiment Hardware Usage.}
All experiments were run on a server with one NVIDIA A100 80GB PCIe GPU, 28 AMD EPYC 7763 CPU cores, and 118 GiB of RAM.

\subsection{Algorithms}
\label{sec:appendix_algorithms}
\textbf{Attack implementation.}
In this paper, we implement three poisoning attacks to evaluate the unlearning defense. The three poisoning attacks are implemented as described in Algorithm~\ref{alg:unified_poison_attack}. We train each adversarial policy for $10^6$ steps and define its reward function based on the attack goal. For Max Cost and Max Reward, we set the poisoned cost to zero and assign a high reward label. For Min Reward, we set the poisoned cost to zero and assign a low reward label, so that the learned policy is pushed toward low-return behavior without directly maximizing cost.

After training the adversarial policy, we roll it out to generate adversarial trajectories and compute the attack score for each trajectory. We then select poisoned samples with the highest attack score, where the score corresponds to high cost, high reward, or low reward depending on the attack type. Finally, we modify the reward and cost labels of the selected samples and inject them into the original dataset $\mathcal{D}$ to obtain the poisoned offline dataset $\widetilde{\mathcal{D}}$.

\begin{algorithm2e}[h]
\caption{Offline Dataset Poisoning Attack}
\label{alg:unified_poison_attack}
\DontPrintSemicolon
\SetKwInOut{Input}{Input}
\SetKwInOut{Output}{Output}

\Input{Clean offline dataset $\mathcal{D}$; environment $\mathcal{M}$; attack type 
$z \in \{\mathrm{MC}, \mathrm{MR}, \mathrm{MinR}\}$; poisoning ratio $\rho$; 
adversarial policy optimization steps $T$; number of rollout trajectories $N$.}
\Output{Poisoned offline dataset $\widetilde{\mathcal{D}}$.}

Initialize adversarial policy $\pi_{\mathrm{adv}}$\;

Define the adversarial reward according to the attack type:
\[
R_z(s,a,r,c)=
\begin{cases}
c, & z=\mathrm{MC},\\
r, & z=\mathrm{MR},\\
-r, & z=\mathrm{MinR}.
\end{cases}
\]

\For{$t=1,\ldots,T$}{
    Roll out $\pi_{\mathrm{adv}}$ in $\mathcal{M}$\;
    Update $\pi_{\mathrm{adv}}$ to maximize 
    $\mathbb{E}_{\pi_{\mathrm{adv}}}\left[\sum_{t}\gamma^t R_z(s_t,a_t,r_t,c_t)\right]$\;
}

Roll out $\pi_{\mathrm{adv}}$ for $N$ trajectories and obtain 
$\mathcal{D}_{\mathrm{adv}}=\{\tau_i\}_{i=1}^{N}$\;

Compute the attack score for each trajectory:
\[
S_z(\tau_i)=\sum_{t}\gamma^t R_z(s_t,a_t,r_t,c_t).
\]

Select the top $\lceil \rho |\mathcal{D}| \rceil$ poisoned samples from 
$\mathcal{D}_{\mathrm{adv}}$ according to $S_z$\;

Set $r_{\mathrm{high}}$ as a high reward value from $\mathcal{D}$ and 
$r_{\mathrm{low}}$ as a low reward value from $\mathcal{D}$\;

\For{each selected transition $(s,a,r,c,s')$}{
    \eIf{$z \in \{\mathrm{MC}, \mathrm{MR}\}$}{
        Set poisoned label $\tilde{c} \gets 0$ and $\tilde{r} \gets r_{\mathrm{high}}$\;
    }{
        Set poisoned label $\tilde{c} \gets 0$ and $\tilde{r} \gets r_{\mathrm{low}}$\;
    }
    Add $(s,a,\tilde{r},\tilde{c},s')$ to $\mathcal{D}_f$\;
}

Construct the poisoned dataset:
\[
\widetilde{\mathcal{D}} = \mathcal{D} \cup \mathcal{D}_f .
\]

\Return $\widetilde{\mathcal{D}}$\;
\end{algorithm2e}

\textbf{Safe-RULE implementation.} Our Safe-RULE algorithm is described in Algorithm~\ref{alg:safe_rule}. The algorithm alternates between critic unlearning and actor unlearning. The critic update preserves value estimation on the keep set and suppresses the value structure on the forget set. The actor update then combines the keep objective and forget objective using an adaptive forget weight.

For COptiDICE\cite{lee2022coptidice}, the implementation is slightly different because it does not use explicit reward and cost $Q$-critics. Instead, COptiDICE learns stationary-distribution correction through the $\nu$-network and the $\chi$-network. Therefore, when applying Safe-RULE to COptiDICE, we use the $\nu$-network as the reward-side value proxy and the $\chi$-network as the cost-side value proxy. During unlearning, we first update the $\nu$-network, $\chi$-network, and the Lagrange multiplier using only the keep data $\mathcal{D}_k$, following the original COptiDICE objective, so that the density-ratio estimation and cost constraint remain stable on the clean distribution. Then, we freeze these value networks and update the actor. On the keep data, the actor is trained with the standard COptiDICE weighted behavior-cloning objective, where the weight is computed from the $\nu$-network. On the forget data, we compute the forget-state weights using both $\nu$ and $\chi$: the $\nu$-based weight suppresses the reward-side preference of the poisoned actions, while the normalized $\chi$ value increases the penalty for actions that should be treated as costly. In practice, this is implemented by reducing the log-probability of the forget actions under the actor distribution, weighted by the reward-side and cost-side proxies. The adaptive forget weight is then updated according to the gap between the target cost level $\kappa+\sigma$ and the current $\chi$ value on $\mathcal{D}_f$. This allows Safe-RULE to apply the same keep-forget principle to COptiDICE, even though it does not have the standard actor-critic structure used by BCQ-Lag, BEAR-Lag, and CPQ.

\begin{algorithm2e}[h]
\caption{Safe-RULE: Safe Reinforcement Unlearning}
\label{alg:safe_rule}
\DontPrintSemicolon
\SetKwInOut{Input}{Input}
\SetKwInOut{Output}{Output}

\Input{Corrupted policy $\pi_\theta$; reward critic $Q^r_\phi$; cost critic $Q^c_\psi$; keep dataset $\mathcal{D}_k$; forget dataset $\mathcal{D}_f$; safety threshold $\kappa$; margin $\sigma$; unlearning steps $T$; critic weights $\alpha_k,\alpha_f$; adaptive forget weight $\beta_f$.}
\Output{Unlearned safe policy $\pi_\theta$.}

Initialize $\beta_f \gets 0.1$\;

\For{$t=1,\ldots,T$}{
    Sample mini-batch $\mathcal{B}_k$ from $\mathcal{D}_k$\;
    Sample mini-batch $\mathcal{B}_f$ from $\mathcal{D}_f$\;

    \tcp{Critic unlearning}
    Compute keep Bellman targets $y_r$ and $y_c$ on $\mathcal{B}_k$ using the underlying offline safe RL algorithm\;

    Compute critic keep losses:
    \[
    \mathcal{L}_{\mathrm{critic}}^r(\mathcal{B}_k)
    =
    \mathbb{E}_{(s,a)\sim\mathcal{B}_k}
    \left[
    (Q^r_\phi(s,a)-y_r)^2
    \right],
    \]
    \[
    \mathcal{L}_{\mathrm{critic}}^c(\mathcal{B}_k)
    =
    \mathbb{E}_{(s,a)\sim\mathcal{B}_k}
    \left[
    (Q^c_\psi(s,a)-y_c)^2
    \right].
    \]

    Set reward reference $\bar{Q}^r$ as the low quantile of $y_r$ on $\mathcal{B}_k$\;
    Set cost reference $\bar{Q}^c \gets \kappa+\sigma$\;

    Compute critic forget losses:
    \[
    \mathcal{L}_{\mathrm{critic}}^r(\mathcal{B}_f)
    =
    \mathbb{E}_{(s,a)\sim\mathcal{B}_f}
    \left[
    \mathrm{softplus}(Q^r_\phi(s,a)-\bar{Q}^r)
    \right],
    \]
    \[
    \mathcal{L}_{\mathrm{critic}}^c(\mathcal{B}_f)
    =
    \mathbb{E}_{(s,a)\sim\mathcal{B}_f}
    \left[
    \mathrm{softplus}(\kappa+\sigma-Q^c_\psi(s,a))
    \right].
    \]

    Update $Q^r_\phi$ using:
    \[
    \mathcal{L}_{\mathrm{critic}}^r
    =
    \alpha_k\mathcal{L}_{\mathrm{critic}}^r(\mathcal{B}_k)
    +
    \alpha_f\mathcal{L}_{\mathrm{critic}}^r(\mathcal{B}_f).
    \]

    Update $Q^c_\psi$ using:
    \[
    \mathcal{L}_{\mathrm{critic}}^c
    =
    \alpha_k\mathcal{L}_{\mathrm{critic}}^c(\mathcal{B}_k)
    +
    \alpha_f\mathcal{L}_{\mathrm{critic}}^c(\mathcal{B}_f).
    \]

    \tcp{Actor unlearning}
    Compute keep actor loss:
    \[
    \mathcal{L}_{\mathrm{actor}}(\mathcal{B}_k)
    =
    -\mathbb{E}_{\mathcal{B}_k}
    \left[
    \mathbb{I}[Q^c_\psi(s,\pi_\theta(s)) \le \kappa]Q^r_\phi(s,\pi_\theta(s))
    \right]
    +
    \mathbb{E}_{\mathcal{B}_k}
    \left[
    \mathrm{softplus}(Q^c_\psi(s,\pi_\theta(s))-\kappa)
    \right].
    \]

    Compute forget actor loss:
    \[
    \mathcal{L}_{\mathrm{actor}}(\mathcal{D}_f)
    =
    \mathbb{E}_{\mathcal{D}_f}
    \left[
    \mathbb{I}[Q^c_\psi(s,\pi_\theta(s)) \ge \kappa]Q^r_\phi(s,\pi_\theta(s))
    \right]
    +
    \mathbb{E}_{\mathcal{B}_f}
    \left[
    \mathrm{softplus}(\kappa+\sigma-Q^c_\psi(s,\pi_\theta(s)))
    \right].
    \]

    Compute forgetting gap:
    \[
    \Delta
    =
    \kappa+\sigma
    -
    \mathbb{E}_{\mathcal{B}_f}
    \left[
    Q^c_\psi(s,\pi_\theta(s))
    \right].
    \]

    Update adaptive forget weight:
    \[
    \beta_f
    \gets
    \mathrm{clip}
    \left(
    \beta_f+\Delta\tau_\beta,\,
    \beta_{\min},\,
    \beta_{\max}
    \right).
    \]

    Update actor $\pi_\theta$ using:
    \[
    \mathcal{L}_{\mathrm{actor}}^{\mathrm{total}}
    =
    \mathcal{L}_{\mathrm{actor}}(\mathcal{B}_k)
    +
    \beta_f
    \mathcal{L}_{\mathrm{actor}}(\mathcal{B}_f).
    \]

    Soft-update target networks if used by the backbone algorithm\;
}

\Return $\pi_\theta$\;
\end{algorithm2e}